\newtheorem{theorem}{Theorem}
\newtheorem{lemma}{Lemma}
\newtheorem{corol}{Corollary}
\newtheorem{defi}{Definition}
\newcommand{\ubar}[1]{\underaccent{\bar}{#1}}
\def\BibTeX{{\rm B\kern-.05em{\sc i\kern-.025em b}\kern-.08em
    T\kern-.1667em\lower.7ex\hbox{E}\kern-.125emX}}
\begin{document}
\bstctlcite{setting}

\title{Provable Performance Bounds for Digital Twin-driven Deep Reinforcement Learning in Wireless Networks: A Novel Digital-Twin Bisimulation Metric}

\author{Zhenyu~Tao,
        Wei~Xu,~\IEEEmembership{Fellow,~IEEE},
        and Xiaohu~You,~\IEEEmembership{Fellow,~IEEE}

\thanks{Z. Tao, W. Xu, and X. You are with the National Mobile Communications Research Lab, Southeast University, Nanjing 210096, China, and also with the Pervasive Communication Research Center, Purple Mountain Laboratories, Nanjing 211111, China (email: \{zhenyu\_tao, wxu, xhyu\}@seu.edu.cn).} %
}


\maketitle


\begin{abstract}
Digital twin (DT)-driven deep reinforcement learning (DRL) has emerged as a promising paradigm for wireless network optimization, offering safe and efficient training environment for policy exploration. However, in theory existing methods cannot always guarantee real-world performance of DT-trained policies before actual deployment, due to the absence of a universal metric for assessing DT's ability to support reliable DRL training transferrable to physical networks. In this paper, we propose the DT bisimulation metric (DT-BSM), a novel metric based on the Wasserstein distance, to quantify the discrepancy between Markov decision processes (MDPs) in both the DT and the corresponding real-world wireless network environment. We prove that for any DT-trained policy, the sub-optimality of its performance (regret) in the real-world deployment is bounded by a weighted sum of the DT-BSM and its sub-optimality within the MDP in the DT. Then, a modified DT-BSM based on the total variation distance is also introduced to avoid the prohibitive calculation complexity of Wasserstein distance for large-scale wireless network scenarios. Further, to tackle the challenge of obtaining accurate transition probabilities of the MDP in real world for the DT-BSM calculation, we propose an empirical DT-BSM method based on statistical sampling. We prove that the empirical DT-BSM always converges to the desired theoretical one, and quantitatively establish the relationship between the required sample size and the target level of approximation accuracy. Numerical experiments validate this first theoretical finding on the provable and calculable performance bounds for DT-driven DRL.

\end{abstract}

\begin{IEEEkeywords}
Digital twin, Markov decision process (MDP), deep reinforcement learning (DRL), transfer learning, bisimulation metric.
\end{IEEEkeywords}

\section{Introduction}
\IEEEPARstart{T}{he} long-term evolution of cellular networks, marked by growing scale, density, and heterogeneity, substantially increases the difficulty of wireless network optimization \cite{10183795}. Deep reinforcement learning (DRL) emerges as a promising solution for tackling extensive state and action spaces and nonconvex optimization problems. It has been successfully applied to various network optimization tasks, such as admission control \cite{van2019optimal}, resource allocation \cite{10552627}, node selection \cite{9767557}, and task offloading \cite{10024766} in wireless networks.

Training DRL agents in real-world wireless networks faces notable obstacles, including prohibitive trial-and-error costs and poor network performance before convergence \cite{9372298}. To overcome these issues, the concept of digital twin (DT) has been introduced \cite{9854866}. By creating a virtual replica of real-world networks, DT provides safe and efficient training environment for exploring policies, i.e., state-to-action mappings, before real-world deployment \cite{tao2023wireless}. This approach enables effective DRL training while ensuring the performance of the physical wireless network \cite{10623528}. By now, DT-driven DRL paradigm has been widely employed in numerous network optimization tasks \cite{10438215,10415196,10363344}, offering significant advantages over traditional training methods, such as reduced performance fluctuations \cite{10078846}, accelerated convergence \cite{10486201}, and lower energy consumption \cite{10345669}.

The effectiveness of the DT-driven DRL paradigm relies not only on the performance of DRL models, as extensively studied in prior works \cite{9904958}, but more fundamentally on the quality of the underlying DT. This quality corresponds to the DT's capability to provide a reliable training environment and ensure deployment performance. However, there remains a lack of universal, systematic, and quantitative metrics to assess such quality in current studies. A critical question arises: How can we determine whether a DT is sufficiently effective for DRL training? One intuitive approach is to evaluate the similarity between DT and its real-world counterpart using task-specific metrics. For example, user trajectory and wireless channel similarities can be used for DT-enabled user association tasks \cite{tao2024parallel}, and service request arrival rate can be compared for DT-enabled admission control tasks \cite{tao2023DTAC}. However, these metrics are developed for specific tasks and do not serve as a universal measure of the DT's quality. They capture specific aspects of DT fidelity to the real-world scenario, which inadequately characterize the entire environment, and their relationship with the performance of transferred policies remains unclear.

An alternative approach is to evaluate the DT's quality by assessing the performance of a DT-trained policy in the real environment. For instance, in \cite{10234388}, the effectiveness of the DT-driven DRL method is demonstrated in terms of deployment performance, compared with a policy trained directly in the real environment and other baseline methods (random and greedy). However, challenges arise with the deployment-based assessment. First, the performance of the transferred policy is often unstable, influenced not only by DT's quality but also by factors such as distinct DRL algorithms, neural network (NN) architectures, iteration counts, and other hyperparameters \cite{9372298}. This fluctuation in deployment performance further complicates precise evaluation of DT's quality. Meanwhile, the generalization ability of NNs \cite{pmlr-v97-cobbe19a} may obscure certain flaws in DTs, potentially causing problems when applying a seemingly reliable DT for training other models, such as lightweight NNs \cite{chen2024review} and distributed DRL agents~\cite{9725256}. Most importantly, since
DT’s quality is evaluated through real-world deployment, the performance of each transferred policy cannot be guaranteed during the evaluation. This undermines the primary goal of DT to ensure reliable and predictable performance of real-world wireless networks.

In addition, the DT-driven DRL can be considered a specific case of transfer learning within DRL, allowing us to resort to machine learning theories \cite{BSM}. While transfer learning in supervised learning has been extensively studied \cite{5288526,10636241}, its integration into the Markov decision process (MDP) framework introduces additional complexities. As a result, theoretical work on transfer learning in DRL remains largely underexplored, both in the early stage research \cite{taylor2009transfer} and more recent studies \cite{10172347}. We will discuss this in the following subsection of related works.

Consequently, for effective applications of DT-driven DRL, there is an urgent need for a direct, policy-independent, and quantitative assessment of DT's quality, as well as deeper understanding of its relationship with the performance of transferred policies in its real-world counterpart. The main contributions of the paper are as follows:


 


\begin{itemize}
\item{We propose the DT bisimulation metric (DT-BSM), a novel metric to measure the discrepancy between the two MDPs that model DT and the real wireless network environment, respectively. Extended from the bisimulation metric (BSM) \cite{BSM}, the DT-BSM quantifies the discrepancy of the optimal value functions in the couple of two MDPs, thereby ensuring policy-independent comparison.
}
\item{Through DT-BSM, we establish provable performance bounds for the policy transferred from the DT to the corresponding real-world wireless network. To mitigate the ``curse of dimensionality" in DT-BSM calculation for large-scale wireless networks, we further introduce a modified DT-BSM based on the total variation distance. This adaptation significantly reduces the computational complexity while maintaining the bound on deployment performance.
} 
\item{In case the transition probabilities of the MDP are not directly obtainable in real-world scenarios for DT-BSM calculation, we propose the empirical DT-BSM through statistical sampling. We prove that
the empirical DT-BSM always converges to the theoretical one and quantitatively determines the required sample size for a target level of approximation accuracy. }

\item{Numerical experiments on a typical admission control task in wireless networks corroborate the theoretical findings on the performance bound. To the best of our knowledge, DT-BSM is the first metric to directly measure the DT's quality for DRL training and to provide provable and calculable performance bounds for DT-driven DRL.
}
\end{itemize}



\subsection{Related Works}
The bisimulation metric, proposed by Ferns \textit{et al.} in the early 2000s \cite{BSM}, quantifies state similarity within an MDP through a smoothly varying distance measure with respect to rewards and transition probabilities. BSM has since been widely adopted for state aggregation \cite{Li2006TowardsAU,LeLan2021} and value function approximation \cite{NEURIPS2021_256bf8e6} in MDP. However, as BSM was originally designed for a single MDP, its application to transfer learning between distinct MDPs remains limited. Prior works have explored the use of BSM for policy transfer analysis in reinforcement learning (RL) but face critical limitations. For instance, The study in \cite{castro2010using} employed a relaxed definition of BSM to analyze policy transfer, but its theoretical bound is limited to transferring an optimal policy within the source MDP, an assumption that can be overly idealized for practical applications. Moreover, this bound is formulated solely for the one-step action-value function, which does not fully reflect the long-term performance of the transferred policy (see Theorem 5 in \cite{castro2010using}). Another study in \cite{phillips2006knowledge} merged the state spaces of the source and target MDPs into a disjoint union state space, thereby using BSM for theoretical analysis in transfer learning. This merging process incurs prohibitive computational costs that are impractical for RL with large state-action spaces, as highlighted in \cite{taylor2009transfer}, letting alone more complicated DRL tasks.
\setlength{\parskip}{0\baselineskip}

With advancements in high-performance computing devices and NN architectures, DRL has largely replaced traditional RL, becoming the predominant method for learning-based policy optimization \cite{9904958}. Over the past decade, DRL has primarily been applied to games and simulations, with research efforts largely dedicated to enhancing performance within given environments \cite{silver2016mastering,berner2019dota}. While some studies have explored transfer learning in DRL, they focus on improving policy performance through transfer learning rather than theoretically analyzing the policy transfer between source and target environments \cite{10172347}. Provable and computable performance bounds for policy transfer in DRL remain largely underexplored.

In wireless network optimization, existing theoretical studies associated with DT-driven DRL are mainly focused on the analysis of DRL. For example, the study in \cite{10147312} established gradient convergence bounds for multi-agent DRL in communication networks, and study \cite{10623365} theoretically compared the convergence rate of DRL methods in dynamic spectrum access. Although \cite{10287987} proved DRL’s capability to attain optimal policies in edge computing, theoretical analysis for provable performance of DT-driven DRL remains scarce. To our knowledge, one exception in \cite{10522623} attempted such analysis but limitd its bound to one-step transition probabilities, which inadequately reflects long-term DRL performance in deployment (see Theorem 1 and Appendix A in \cite{10522623}). The absence of theoretical analysis for DTs leads most existing DT-driven DRL studies to assume perfect environment replication by DT and to focus exclusively on enhancing DRL methods, as seen in applications in cell-free networks \cite{10078846}, space-air-ground integrated networks \cite{10345669}, and mobile edge computing \cite{10235311}. This idealized assumption limits DT-driven DRL’s practical applicability in real-world wireless networks, where modeling inaccuracies are inevitable.

\subsection{Outline}
The paper is organized as follows. Section~\ref{Sec:2} presents a detailed problem definition. In Section~\ref{Sec:3}, we provide a sketch of our main results. Section~\ref{sec:defi} provides the formal definition of the proposed DT-BSM and its associated properties. Section~\ref{sec:bound} establishes the provable performance bound for policy transfer. The empirical DT-BSM is introduced in Section~\ref{sec:emp}. Then we validate our results through numerical experiments in Section~\ref{sec:exp}. Finally, a conclusion is drawn in Section~\ref{sec:con}.

\section{Problem Description} \label{Sec:2}
Notable trial-and-error costs and suboptimal performance of DRL agents before convergence have driven the development of the DT-driven DRL paradigm. As illustrated in Fig.~\ref{fig:scene}, this paradigm involves collecting real-world data to construct a virtual replica, i.e., DT, of the physical environment. Instead of direct interaction with the physical network, DRL agents rely on the DT to explore and optimize the policy in a safer, faster, and more cost-efficient manner. After sufficient training, the DRL agents are then deployed into the real-world environment, avoiding risks and costs in the physical wireless network during the training phase. However, challenges persist in the current DT-driven DRL paradigm. Existing evaluation metrics for DTs typically assess their fidelity to the real environment from specific aspects, failing to establish direct connections with the optimization target. Consequently, the performance of transferred policies cannot be reliably guaranteed before actual deployment in practice.

\begin{figure}[t]
    \centering
    \includegraphics[width=0.85\linewidth]{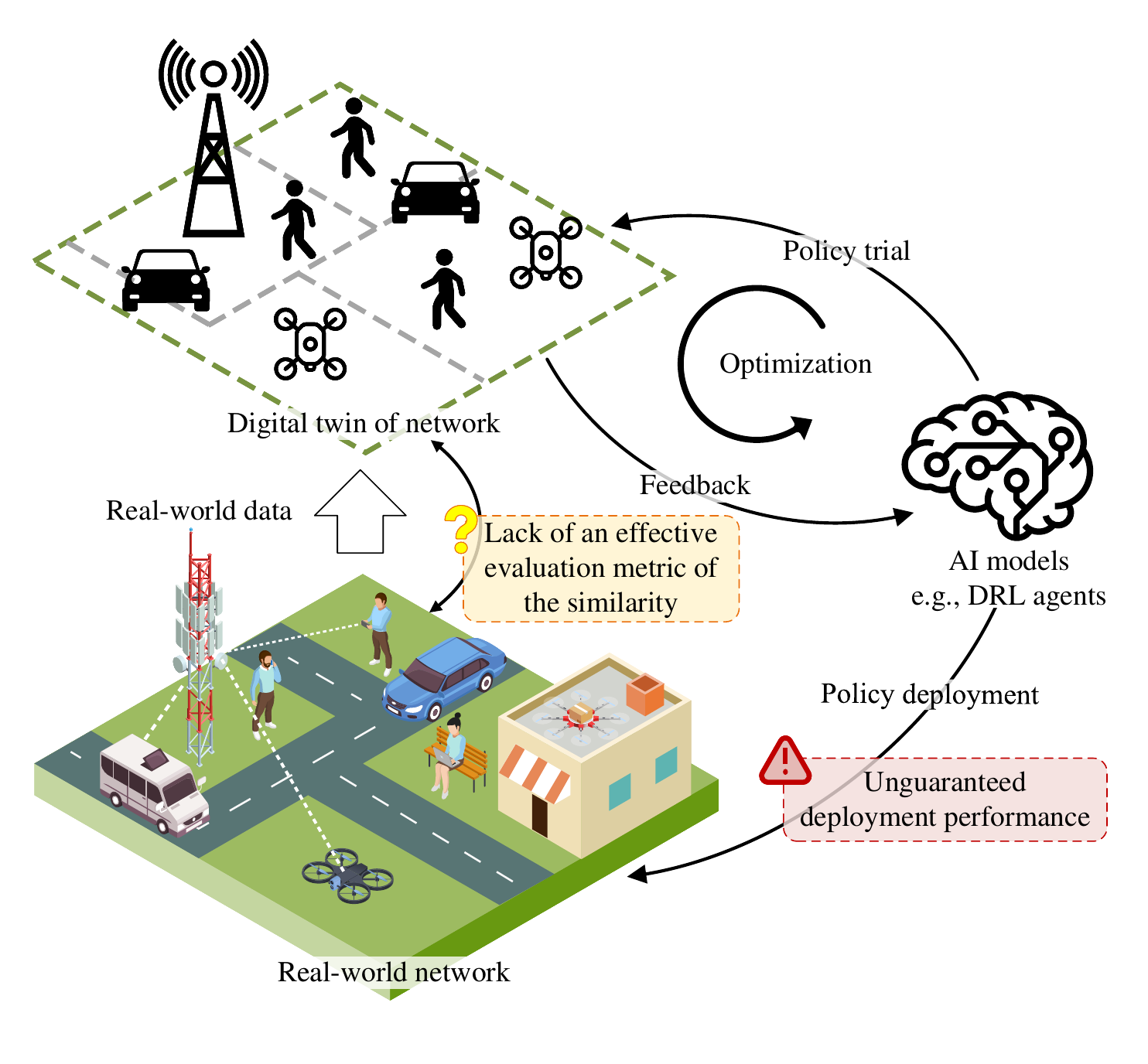}
    \caption{A schematic of DT-driven DRL}
    \label{fig:scene}
\end{figure}
We formulate this problem as follows. Consider a real-world network optimization task modeled by a 5-tuple MDP $ \langle \mathcal{S}, \mathcal{A}, \mathbb{P}, R, \gamma\rangle$, where $\mathcal{S}$ is the state space, $\mathcal{A}$ is the action space, $\mathbb{P}(\tilde{s}|s,a)$ is the transition probability ($a\in\mathcal{A}$, $\{\tilde{s},s\}\in\mathcal{S}$, and $\tilde{s}$ denotes the next state), $ R(s,a)$ is the reward function, and $\gamma \in \left(0,1\right)$ is the discount factor. For clarity, we refer to such an MDP as the real MDP in the following discussion. The goal of DRL is to devise a policy $\pi$ that maximizes the long-term average reward. In conventional DRL, the objective is achieved by optimizing the associated value function as follows
\begin{equation}
    \max_\pi\  V_\text{real}^\pi(s),\ \forall s \in \mathcal{S},
\end{equation}
where the value function is defined by the Bellman equation
\begin{equation}
    V_\text{real}^\pi(s)=R(s,\pi(s))+\gamma\sum_{\tilde{s}\in \mathcal{S}} \mathbb{P}(\tilde{s}|s,\pi(s)) V_\text{real}^\pi(\tilde{s}), \label{eq:v2}
\end{equation}
and $\pi(s)\in \mathcal{A}$ is the action chosen by policy $\pi$ at state $s$. To simplify the discussion, a deterministic policy $\pi$ is assumed. Nevertheless, our findings can be readily extended to stochastic policies by reformulating (\ref{eq:v2}) to incorporate a summation over action probabilities. When $\pi$ reaches the optimality, denoted by superscript $*$, the value function becomes the optimal value function, given by
\begin{equation}
    V_\text{real}^*(s)=\max_{a}\Big\{R(s,a)+\gamma\sum_{\tilde{s}\in \mathcal{S}}\mathbb{P}(\tilde{s}|s,a)V_\text{real}^*(\tilde{s})\Big\}. \label{eq:v1}
\end{equation}

The DT is modeled by a similar MDP $\langle \mathcal{S}, \mathcal{A}, \mathbb{P}', R', \gamma\rangle$, where $\mathcal{S}, \mathcal{A},$ and $\gamma$ can be readily aligned with the real MDP through simulation configuration. In contrast, $\mathbb{P}'$ and $R'$ inevitably differ from their real counterparts, with discrepancies ranging from minor to significant depending on the fidelity of the simulation. We denote this MDP as the DT MDP in the following discussion. The value function and optimal value function in DT, respectively denoted by $V_\text{DT}^\pi(s)$ and $V_\text{DT}^*(s)$, are defined by replacing $\mathbb{P}$ and $R$ in (\ref{eq:v2}) and (\ref{eq:v1}) with $\mathbb{P}'$ and $R'$, respectively. In DT-driven DRL, the policy $\pi$ is optimized by
\begin{equation}
    \max_\pi\  V_\text{DT}^\pi(s),\ \forall s \in \mathcal{S}
\end{equation}
before being deployed into the real MDP. The deployment performance of the transferred policy is evaluated through its sub-optimality (also known as the regret \cite{kaelbling1996reinforcement}), which is defined as the expected discounted reward loss when following the transferred policy $\pi$ instead of the optimal policy in the real MDP. Since the optimal expected discounted reward is given by $V_\text{real}^*(s)$, this loss is formulated as $V_\text{real}^*(s)-V_\text{real}^\pi(s)$.

Our goal is to devise a metric for directly quantifying the discrepancy between the DT MDP, $\langle \mathcal{S}, \mathcal{A}, \mathbb{P}', R', \gamma\rangle$, and the real MDP, $ \langle \mathcal{S}, \mathcal{A}, \mathbb{P}, R, \gamma\rangle$. This metric should, for any policy $\pi$ trained in DT, establish a provable upper bound on the sub-optimality of the transferred policy before actual deployment, thereby ensuring reliable and predictable performance of physical wireless networks.

\section{Main Results}\label{Sec:3}
In this section, we summarize our main results, the newly defined DT-BSM along with the upper bound on the sub-optimality of the transferred policy.

We draw on the BSM \cite{BSM}, which is proposed to quantify the discrepancy between distinct states within a single MDP, to define a new metric that compares states across two MDPs with different transition probabilities and reward functions. A comparison of BSM and DT-BSM is depicted in Fig.~\ref{fig:bisim}. For clarity, we denote the states in the real MDP as $\{s_1,s_2,\dots\}$ and those corresponding states in the DT MDP as $\{s'_1,s'_2,\dots\}$. Note that the DT MDP and the real MDP share the same state space $\mathcal{S}$, that is, $s_i$ and $s'_i$ represent the identical state, i.e., $s_i=s_i'\in\mathcal{S}$ for all $i$. This equivalence is assumed implicitly in the subsequent discussion.
The proposed DT-BSM, denoted by $\bar{d}$, is defined as a recursive function with respect to the transition probabilities and reward functions of the two MDPs:
\begin{defi} \label{defi:DT-BSM}
The DT-BSM between the real MDP $\langle \mathcal{S}, \mathcal{A}, \mathbb{P}, R, \gamma\rangle$ and the DT MDP $ \langle \mathcal{S}, \mathcal{A}, \mathbb{P}', R', \gamma\rangle$ is given by
\begin{align}\label{eq:dbar}
    \bar{d}\left(s_i, s_j'\right)=\ &\max_{a} \Big\{\big|R(s_i,a)-R^\prime(s_j',a)\big|\notag\\
    &\qquad \ \ +\gamma W_1\big(\mathbb{P}(\cdot|s_i,a), \mathbb{P}^\prime(\cdot|s_j',a) ;\bar{d}\big)\Big\},
\end{align}
where $W_1$ denotes the 1-Wasserstein distance.
\end{defi}
\noindent Detailed definitions and in-depth analysis of the Wasserstein distance and the DT-BSM are provided in Section~\ref{sec:defi}.
\begin{figure}[t]
    \centering
    \includegraphics[width=0.85\linewidth]{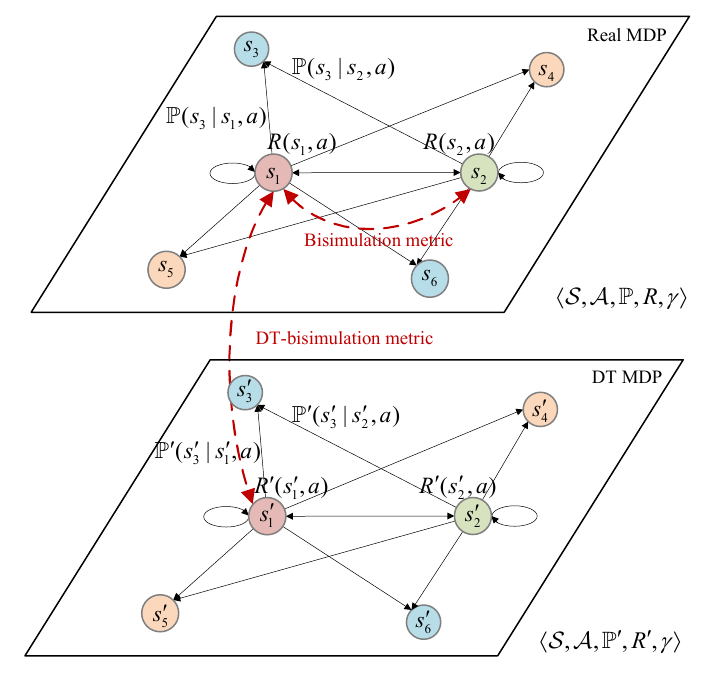}
    \caption{The comparison of BSM and DT-BSM}
    \label{fig:bisim}
\end{figure}

By leveraging the newly defined DT-BSM, we establish upper bounds on the sub-optimality of the transferred policy. Let $\|V_\text{real}^*-V_\text{real}^{\pi}\|$ denote the upper bound of the sub-optimality of policy $\pi$ in the real MDP, formulated by
\begin{equation}\label{opti-define}
    \|V_\text{real}^*-V_\text{real}^{\pi}\| \triangleq \max_{i}\big\{ V_\text{real}^*(s_i)-V_\text{real}^\pi(s_i) \big\},
\end{equation}
and similarly define $\|V_\text{DT}^*-V_\text{DT}^{\pi}\|$ as the upper bound of its sub-optimality in the DT MDP. Then the following theorem holds.
\begin{theorem}\label{the:1}
    For any policy $\pi$ learned in the DT MDP, the sub-optimality of its performance when transferred to the real MDP satisfies
  \begin{align}\label{eq:5}
    & \|V_\textnormal{real}^*-V_\textnormal{real}^{\pi}\| \leq \ \frac2{1-\gamma}\max_{i}\bar{d}(s_i,s_i')+\frac{1+\gamma}{1-\gamma}\|V_\textnormal{DT}^*-V_\textnormal{DT}^{\pi}\|\notag\\
    & \quad  \leq \ \frac{2}{(1-\gamma)^2}\max_{i}d_\textnormal{TV}(s_i,s_i')+\frac{1+\gamma}{1-\gamma}\|V_\textnormal{DT}^*-V_\textnormal{DT}^{\pi}\|,
  \end{align}
where $d_\textnormal{TV}$ is the modified DT-BSM constructed using the total variation distance.
\end{theorem}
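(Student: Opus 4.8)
The plan is to bound the real-MDP regret at each fixed state $s_i$ by inserting the two DT value functions as intermediate references, i.e.\ to write $V_\text{real}^*(s_i)-V_\text{real}^\pi(s_i)$ as the telescoping sum of the three increments $V_\text{real}^*(s_i)-V_\text{DT}^*(s_i)$, $V_\text{DT}^*(s_i)-V_\text{DT}^\pi(s_i)$, and $V_\text{DT}^\pi(s_i)-V_\text{real}^\pi(s_i)$, and to take the maximum over $i$ only at the very end. The middle increment is at most $\|V_\text{DT}^*-V_\text{DT}^\pi\|$ directly by (\ref{opti-define}), so the whole task reduces to controlling the first and third increments through $\bar d$.

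For the first increment I would prove the auxiliary inequality $|V_\text{real}^*(s)-V_\text{DT}^*(t)|\le\bar d(s,t)$ for every pair of states (with $s$ read on the real side and $t$ on the DT side) by a self-referential contraction argument: set $M:=\max_{s,t}\{V_\text{real}^*(s)-V_\text{DT}^*(t)-\bar d(s,t)\}$, which is finite because rewards are bounded and $\gamma<1$; at a maximizing pair $(s^\star,t^\star)$ take the greedy action $a^\star$ of $V_\text{real}^*$ at $s^\star$, use $V_\text{DT}^*(t^\star)\ge R'(t^\star,a^\star)+\gamma\sum_{\tilde s}\mathbb{P}'(\tilde s|t^\star,a^\star)V_\text{DT}^*(\tilde s)$, and rewrite the resulting difference of expectations through the optimal coupling $\lambda$ of $W_1(\mathbb{P}(\cdot|s^\star,a^\star),\mathbb{P}'(\cdot|t^\star,a^\star);\bar d)$; since $\sum\lambda(\tilde s_r,\tilde s_d)[V_\text{real}^*(\tilde s_r)-V_\text{DT}^*(\tilde s_d)]\le W_1(\cdot;\bar d)+M$ by the definition of $M$, collecting terms yields $M\le\bar d(s^\star,t^\star)+\gamma M$, hence $M\le0$; repeating with the greedy action of $V_\text{DT}^*$ at $t^\star$ gives the reverse sign. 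Taking $t=s=s_i$ bounds the first increment by $\max_j\bar d(s_j,s_j')$, and applying the inequality twice gives the corollary $|V_\text{DT}^*(u)-V_\text{DT}^*(v)|\le\max_j\bar d(s_j,s_j')+\bar d(u,v)$ that is needed below.

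For the third increment I would use that $V_\text{real}^\pi$ is the unique fixed point of the $\gamma$-contraction $T_\pi^\text{real}$ (the policy Bellman operator of the real MDP) while $V_\text{DT}^\pi=T_\pi^\text{DT}V_\text{DT}^\pi$, so that $\|V_\text{real}^\pi-V_\text{DT}^\pi\|_\infty\le\frac1{1-\gamma}\|T_\pi^\text{real}V_\text{DT}^\pi-T_\pi^\text{DT}V_\text{DT}^\pi\|_\infty$. The residual at $s_i$ is $[R(s_i,\pi(s_i))-R'(s_i,\pi(s_i))]+\gamma\sum_{\tilde s}[\mathbb{P}(\tilde s|s_i,\pi(s_i))-\mathbb{P}'(\tilde s|s_i,\pi(s_i))]V_\text{DT}^\pi(\tilde s)$; expressing the last sum through the optimal coupling of $W_1(\mathbb{P}(\cdot|s_i,\pi(s_i)),\mathbb{P}'(\cdot|s_i,\pi(s_i));\bar d)$ and using $|V_\text{DT}^\pi(\tilde s_r)-V_\text{DT}^\pi(\tilde s_d)|\le 2\|V_\text{DT}^*-V_\text{DT}^\pi\|+\max_j\bar d(s_j,s_j')+\bar d(\tilde s_r,\tilde s_d)$ (from the corollary together with $0\le V_\text{DT}^*-V_\text{DT}^\pi\le\|V_\text{DT}^*-V_\text{DT}^\pi\|$) turns the residual into $\big[|R(s_i,\pi(s_i))-R'(s_i,\pi(s_i))|+\gamma W_1(\cdot;\bar d)\big]+\gamma\big(2\|V_\text{DT}^*-V_\text{DT}^\pi\|+\max_j\bar d(s_j,s_j')\big)$, where the first bracket is $\le\bar d(s_i,s_i')$ because $\pi(s_i)$ is one of the actions entering the maximum in Definition~\ref{defi:DT-BSM} at $(s_i,s_i')$. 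Hence the third increment is at most $\frac1{1-\gamma}\big[(1+\gamma)\max_j\bar d(s_j,s_j')+2\gamma\|V_\text{DT}^*-V_\text{DT}^\pi\|\big]$; adding the three bounds and using $1+\tfrac{1+\gamma}{1-\gamma}=\tfrac2{1-\gamma}$ and $1+\tfrac{2\gamma}{1-\gamma}=\tfrac{1+\gamma}{1-\gamma}$ reproduces the first claimed inequality after maximizing over $i$. The second inequality then follows by substituting the elementary estimate $W_1(\mu,\nu;\bar d)\le(\mathrm{diam}\,\bar d)\cdot d_\text{TV}(\mu,\nu)$ into the recursion of Definition~\ref{defi:DT-BSM} and comparing fixed points with the definition of the total-variation metric $d_\text{TV}$ from Section~\ref{sec:defi}, which costs one extra factor $\tfrac1{1-\gamma}$ on the $\bar d$-term.

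The step I expect to be the main obstacle is the auxiliary lemma $|V_\text{real}^*(s)-V_\text{DT}^*(t)|\le\bar d(s,t)$: it is where the recursive definition of $\bar d$ and the Kantorovich/coupling representation of $W_1$ have to be married, and where one must resist invoking a Lipschitz-$W_1$ inequality, since $\bar d$ is not a genuine metric on $\mathcal{S}$ (its two arguments carry asymmetric real/DT roles) and the estimate has to be run on the full matrix $\bar d(\cdot,\cdot)$ via the slack variable $M$. A secondary, purely arithmetic hazard is propagating the $\|V_\text{DT}^*-V_\text{DT}^\pi\|$ slack through the third increment so that the coefficients collapse exactly to $\tfrac2{1-\gamma}$ and $\tfrac{1+\gamma}{1-\gamma}$ rather than to something looser.
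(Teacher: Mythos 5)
Your derivation of the first inequality is sound and the coefficients collapse exactly as you compute, but it takes a genuinely different route from the paper's. For the increment $|V_\text{real}^*(s_i)-V_\text{DT}^*(s_i')|\le\bar d(s_i,s_i')$ the paper (Corollary~\ref{bisimineq}) runs an induction on the $n$-step quantities $V^{(n)}$ and $d_n$ and passes to the limit, whereas your slack-variable argument $M\le\gamma M$ works directly at the fixed points; both are valid. For the third increment the paper does not use the contraction bound $\|V_\text{real}^\pi-V_\text{DT}^\pi\|\le\tfrac1{1-\gamma}\|T_\pi^\text{real}V_\text{DT}^\pi-T_\pi^\text{DT}V_\text{DT}^\pi\|$; it expands one Bellman step, adds and subtracts the optimal value functions, and obtains the self-referential estimate $|V_\text{real}^\pi(s_i)-V_\text{DT}^\pi(s_i')|\le\bar d(s_i,s_i')+\gamma\|V_\text{real}^*-V_\text{real}^\pi\|+\gamma\|V_\text{DT}^*-V_\text{DT}^\pi\|$, absorbing the $\gamma\|V_\text{real}^*-V_\text{real}^\pi\|$ term only at the very end of Lemma~\ref{lemma:5}. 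Your version pays the $\tfrac1{1-\gamma}$ inside the third increment and needs the auxiliary ``almost-triangle'' fact $|V_\text{DT}^*(u)-V_\text{DT}^*(v)|\le\max_j\bar d(s_j,s_j')+\bar d(u,v)$, which the paper avoids; in exchange you never have to manipulate $\|V_\text{real}^*-V_\text{real}^\pi\|$ implicitly. Either way, $1+\tfrac{1+\gamma}{1-\gamma}=\tfrac2{1-\gamma}$ and $1+\tfrac{2\gamma}{1-\gamma}=\tfrac{1+\gamma}{1-\gamma}$ check out.

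The second inequality is where your argument has a genuine gap. The estimate $W_1(\mu,\nu;\bar d)\le(\operatorname{diam}\bar d)\cdot\mathrm{TV}(\mu,\nu)$ is false for the cost function $\bar d$, precisely because $\bar d$ is not a pseudometric: $\bar d(s_i,s_i')$ need not vanish. Take $\mu=\nu$; the right-hand side is then zero, yet even the diagonal coupling costs $\sum_i\mu(s_i)\,\bar d(s_i,s_i')>0$ in general, so $W_1(\mu,\mu;\bar d)>0$. The correct statement, which is what Lemma~\ref{lemma:6} proves via the explicit two-part transportation plan, is $W_1(P,Q;\bar d)\le \mathrm{TV}(P,Q)\max_{i,j}\bar d(s_i,s_j')+\bigl(1-\mathrm{TV}(P,Q)\bigr)\max_i\bar d(s_i,s_i')$: the shared mass stays put but still pays the diagonal cost. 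Substituting this into the recursion yields $\max_i\bar d(s_i,s_i')\le\max_i d_\text{TV}(s_i,s_i')+\gamma\max_i\bar d(s_i,s_i')$, and it is the rearrangement of this self-referential inequality --- not a diameter bound --- that produces the extra $\tfrac1{1-\gamma}$ and hence the $\tfrac{2}{(1-\gamma)^2}$ in (\ref{eq:5}). Your account is also internally inconsistent: if your diameter estimate held, you would get $\bar d(s_i,s_i')\le d_\text{TV}(s_i,s_i')$ with no extra factor at all, contradicting your own claim that the step ``costs one extra factor $\tfrac1{1-\gamma}$''.
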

\noindent The definition of $d_\text{TV}$ and the proof of Theorem~\ref{the:1} are given in Section~\ref{sec:bound}.

Theorem~\ref{the:1} demonstrates that the sub-optimality of any policy $\pi$ in real-world deployment is bounded by a weighted sum of the DT-BSM and the sub-optimality of $\pi$ within the DT MDP. This additive bound, in contrast to more complex exponential or other functional forms, suggests a promising outlook for the practical application of DT-driven DRL. Specifically, the DT can be independently improved by optimizing the DT-BSM, regardless of the policy employed, while the policy optimization can be safely conducted within the DT environment, eliminating the need for high-risk physical network interactions. Collectively, the proposed bounds theoretically guarantee the deployment performance of DT-trained policy, thereby mitigating the risks associated with deploying unverified policies into physical wireless networks.

\section{Digital Twin Bisimulation Metric}\label{sec:defi}

In this section, we provide a definition of the DT-BSM and develop its key properties for evaluating state discrepancies between the DT MDP and its corresponding real MDP.

\subsection{Definition of DT-BSM}
As discussed earlier, our analysis focuses on the discrepancy between the DT MDP and the real MDP, a discrepancy that is independent of any specific policy $\pi$. Given this, the optimal value function $V^*$, which represents the maximum expected reward achievable across all possible policies in the MDP, serves as a suitable basis for comparison. By taking the difference between $V_\text{real}^*(s)$ and $V_\text{DT}^*(s')$, we have the following inequality, for any $i$ and $j$,
\begin{align}\label{eq:7}
     &\ \big|V_\text{real}^*(s_i)-V_\text{DT}^*(s_j')\big|\notag\\
     =&\ \Big| \max_{a}\Big\{R(s_i,a)+\gamma\sum_{k=1}^{|\mathcal{S}|}\mathbb{P}(s_k|s_i,a)V_\text{real}^*(s_k)\Big\}\notag\\
     &\ - \max_{a}\Big\{R'(s_j',a)+\gamma\sum_{k=1}^{|\mathcal{S}|}\mathbb{P}'(s_k'|s_j',a)V_\text{DT}^*(s_k')\Big\}\Big|\notag\\
    \leq&\ \max_{a}\Big\{\Big| \big( R(s_i,a)+\gamma\sum_{k=1}^{|\mathcal{S}|}\mathbb{P}(s_k|s_i,a)V_\text{real}^*(s_k)\big)\notag\\
     &\qquad\quad - \big(R'(s_j',a)+\gamma\sum_{k=1}^{|\mathcal{S}|}\mathbb{P}'(s_k'|s_j',a)V_\text{DT}^*(s_k')\big)\Big|\Big\}\notag\\
     \leq &\  \max_{a}\Big\{ \Big|R(s_i,a)-R'(s_j',a)\Big|+\gamma\Big|\sum_{k=1}^{|\mathcal{S}|} \mathbb{P}(s_k|s_i,a)V_\text{real}^*(s_k) \notag\\
    &\qquad\quad -\sum_{k=1}^{|\mathcal{S}|} \mathbb{P}'(s_k'|s_j',a)V_\text{DT}^*(s_k')\Big| \Big\}.
\end{align}
In the last inequality in (\ref{eq:7}), the maximization term on the right-hand side consists of two components. The first term clearly represents the discrepancy in the reward functions of the two MDPs. The second term is regarded as a function of $\mathbb{P}(\cdot|s_i,a)$ and $\mathbb{P}'(\cdot|s_j',a)$, so the key challenge in constructing the metric is to identify an appropriate function to characterize the discrepancy between the two $V^*$-weighted transition probabilities.

Drawing on the BSM, we similarly use the 1-Wasserstein distance, also known as Kantorovich--Rubinstein metric \cite{kantorovich1960mathematical}, to quantify the discrepancy. The Wasserstein distance is defined as the minimal cost required to transport mass from one distribution $P$ into another distribution $Q$ \cite{villani2021topics}. Since the comparison is conducted over the DT MDP and the real MDP, we assume both distributions are defined on the same state space $\mathcal{S}$. The transportation plan, denoted by $\boldsymbol{\Lambda}$, is a matrix of dimensions $|\mathcal{S}|\times|\mathcal{S}|$ that specifies the amount of mass transported from each state in $P$ to each state in $Q$. It follows
\begin{equation}\label{matrix}
  \boldsymbol{\Lambda} = \left[\begin{matrix}
\lambda_{1,1} & \lambda_{1,2} & \dots & \lambda_{1,|\mathcal{S}|} \\
\lambda_{2,1} & \lambda_{2,2} & \dots & \lambda_{2,|\mathcal{S}|} \\
\vdots & \vdots & \ddots & \vdots \\
\lambda_{|\mathcal{S}|,1} & \lambda_{|\mathcal{S}|,2} & \dots & \lambda_{|\mathcal{S}|,|\mathcal{S}|}
\end{matrix}\right],
\end{equation}
where $|\mathcal{S}|$ denotes the cardinality of $\mathcal{S}$. Then the Wasserstein distance, i.e., the minimum cost for the transportation plan between $P$ and $Q$ with a cost function $d$, is defined by a linear program as follows.
\begin{align}
W_1(P, Q;d) =\   \min _{\boldsymbol{\Lambda}} &\ \sum_{i,j=1}^{|\mathcal{S}|} \lambda_{i,j} d\left(s_i, s_j'\right), \label{LP0}\\
 \text { subject to} &\ \sum_{j=1}^{|\mathcal{S}|}\lambda_{i,j}=P\left(s_i\right),\ \forall\; i,\label{LP0_cond_1}\\
&\  \sum_{i=1}^{|\mathcal{S}|} \lambda_{i,j}=Q\left(s_j'\right) \label{LP0_cond_2}, \ \forall\; j,\\
&\  \lambda_{i,j} \geq 0,\ \forall\; i,j. \label{LP0_cond_3}
\end{align}
According to the Kantorovich duality \cite{villani2021topics}, this above linear program is equivalent to the following dual linear program:
\begin{align}
       W_1(P, Q;d) =\   \max_{\boldsymbol{\mu},\boldsymbol{\nu}}&\ \sum_{i=1}^{|\mathcal{S}|} \mu_i P(s_i)-\nu_i Q(s_i'),\label{LP}\\
    \text{subject to}&\ \mu_i-\nu_j\leq d(s_i,s_j'),\ \forall\; i,j. \label{LP_cond_1}
\end{align} 
Here, both $\boldsymbol{\mu}$ and $\boldsymbol{\nu}$ are $|\mathcal{S}|$-length vectors. Typically, the cost function $d$ is defined as a distance such that $d(s_i,s_i') = 0$ for identical states $s_i$ and $s_i'$. Considering $P$ and $Q$ are both nonnegative probability distributions, $\mu_i-\nu_i$ needs to be as great as possible to reach the maximum of the objective in (\ref{LP0}). Given that $d(s_i,s_i') = 0$, we have $\mu_i=\nu_i$ for all $i$ as the optimal solution, and thus the objective function reduces to $\max_{\boldsymbol{\mu}}\sum_{i=1}^{|\mathcal{S}|} \mu_i (P(s_i)-Q(s_i'))$ (Kantorovich-Rubinstein theorem \cite{villani2021topics}), as used in BSM. However, the DT-BSM in this paper evaluates the distance across states in two MDPs, where $d(s_i,s_i')$ is not necessarily 0 for the same state in the DT MDP and the real MDP. Therefore, we keep the original form of the dual linear program, as shown in (\ref{LP}) and (\ref{LP_cond_1}).

Following the two linear programs in (\ref{LP0})-(\ref{LP0_cond_3}) and (\ref{LP})-(\ref{LP_cond_1}), we derive two inequalities, respectively. For any $|\mathcal{S}|\times|\mathcal{S}|$ matrix $\boldsymbol{\Lambda}$ with elements $\lambda_{i,j}$ and satisfying conditions (\ref{LP0_cond_1})-(\ref{LP0_cond_3}), we have
\begin{equation}
     W_1(P, Q;d) \leq \sum_{i,j=1}^{|\mathcal{S}|} \lambda_{i,j} d\left(s_i, s_j'\right) \label{leq_ineq}.
\end{equation}
On the other hand, for any pair of $|\mathcal{S}|$-length vectors $\boldsymbol{\mu}$ and $\boldsymbol{\nu}$ satisfying condition (\ref{LP_cond_1}), the following inequality holds
\begin{equation}
    W_1(P, Q;d) \geq \sum_{i=1}^{|\mathcal{S}|} \mu_i P(s_i)-\nu_i Q(s_i') \label{geq_ineq}.
\end{equation}
These two inequalities are pivotal for subsequent derivations.

Now the Wasserstein distance can serve as a function to measure the difference between transition probabilities in the two MDPs, but it still necessitates the cost function $d$ for transportation planning and cost calculation. In most studies, the cost function is given by the Euclidean norm, such as the distance between pixels in the Wasserstein generative adversarial network (WGAN) \cite{pmlr-v70-arjovsky17a}. However, the state space $\mathcal{S}$ in MDP is not a standard Euclidean space, thus the Euclidean norm is not applicable to our problem. To address this, we resort to a recursive definition of the optimal value function, formulated by the following.
\begin{theorem}
    Let $V^{(0)}(s)=0$ be the 0-step optimal value function. Define the $n$-step optimal value function by
    \begin{equation}\label{define:V}
        V^{(n)}(s) =\ \max_{a}\big\{R(s,a)+\gamma\sum_{\tilde{s}\in \mathcal{S}}\mathbb{P}(\tilde{s}|s,a)V^{(n-1)}(\tilde{s})\big\}.
    \end{equation}
    Then $V^{(n)}(s)$ converges to $V^*(s)$ uniformly with $n\rightarrow\infty$.
\end{theorem}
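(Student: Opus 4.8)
The plan is to recognize the recursion (\ref{define:V}) as repeated application of the Bellman optimality operator and then invoke the Banach contraction principle in the supremum norm. First I would introduce the operator $T$ acting on functions $V:\mathcal{S}\to\mathbb{R}$ by
\begin{equation*}
    (TV)(s)=\max_{a}\Big\{R(s,a)+\gamma\sum_{\tilde{s}\in\mathcal{S}}\mathbb{P}(\tilde{s}|s,a)V(\tilde{s})\Big\},
\end{equation*}
so that $V^{(n)}=TV^{(n-1)}=T^{n}V^{(0)}$, and observe that the optimal value function $V^{*}$ of (\ref{eq:v1}) is precisely a fixed point of $T$, i.e. $TV^{*}=V^{*}$. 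Here $\|\cdot\|_\infty$ denotes $\max_{s\in\mathcal{S}}|\cdot|$, which coincides with the sup-norm since $\mathcal{S}$ is finite.

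Second, I would establish the contraction estimate $\|TV-TU\|_\infty\le\gamma\|V-U\|_\infty$ for any bounded $V,U$. The two ingredients are the elementary inequality $\big|\max_{a}f(a)-\max_{a}g(a)\big|\le\max_{a}|f(a)-g(a)|$ and, for each fixed $s$ and $a$, the cancellation of the reward terms together with $\big|\sum_{\tilde{s}\in\mathcal{S}}\mathbb{P}(\tilde{s}|s,a)\big(V(\tilde{s})-U(\tilde{s})\big)\big|\le\|V-U\|_\infty$, which holds because $\mathbb{P}(\cdot|s,a)$ is a probability distribution summing to $1$. Combining these gives $|(TV)(s)-(TU)(s)|\le\gamma\|V-U\|_\infty$ for every $s$, hence the claimed contraction.

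Third, with the contraction in hand I would conclude
\begin{equation*}
    \|V^{(n)}-V^{*}\|_\infty=\|T^{n}V^{(0)}-T^{n}V^{*}\|_\infty\le\gamma^{n}\|V^{(0)}-V^{*}\|_\infty .
\end{equation*}
Since $\gamma\in(0,1)$ and, under the standard assumption of bounded rewards $|R(s,a)|\le R_{\max}$, one has $\|V^{(0)}-V^{*}\|_\infty=\|V^{*}\|_\infty\le R_{\max}/(1-\gamma)<\infty$, the right-hand side tends to $0$ as $n\to\infty$. Thus $V^{(n)}\to V^{*}$ in $\|\cdot\|_\infty$, which is exactly uniform convergence over $\mathcal{S}$.

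The main obstacle is essentially bookkeeping rather than a genuine difficulty: this is the classical convergence of value iteration. The only points requiring care are the max-of-differences inequality in the contraction step and ensuring that $V^{*}$ is a well-defined, finite solution of (\ref{eq:v1}) so that the bound $\gamma^{n}\|V^{(0)}-V^{*}\|_\infty$ is meaningful. If one prefers not to presuppose existence of $V^{*}$, I would instead first show that $\{V^{(n)}\}$ is Cauchy, using $\|V^{(n+1)}-V^{(n)}\|_\infty\le\gamma^{n}\|V^{(1)}-V^{(0)}\|_\infty$ and summing the geometric series; denote its limit by $V^{\infty}$; pass to the limit in (\ref{define:V}) using continuity of $T$ to obtain $TV^{\infty}=V^{\infty}$; and identify $V^{\infty}=V^{*}$ by uniqueness of the fixed point of the contraction $T$.
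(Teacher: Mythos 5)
Your proposal is correct and follows exactly the route the paper indicates: it cites the Banach fixed-point theorem and defers the details to standard references, and your argument is precisely that standard contraction proof (the Bellman operator is a $\gamma$-contraction in the sup-norm, so value iteration converges uniformly to the unique fixed point $V^*$). The contraction estimate and the fallback Cauchy-sequence argument for existence are both sound.
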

\begin{proof}
    The convergence of $V^{(n)}(s)$ is established via the Banach fixed-point theorem \cite{banach1922operations}. For a detailed proof, see \cite{doi:10.1137/10080484X} and \cite{puterman2014markov}.
\end{proof}
\noindent Similarly, the DT-BSM is defined recursively by the following.
\begin{theorem} \label{the:BSM}
    Let $d_n,n\in\mathbb{N}$ denote a map $\mathcal{S} \times \mathcal{S} \rightarrow [0,\frac{R_\textnormal{max}}{1-\gamma}]$, where $R_\textnormal{max}=\max_{i,j,a} |R(s_i,a)-R^\prime(s_j',a)|$. Let $d_0$ be a constant zero function and define
\begin{align}\label{eq:dn}
    d_{n}\left(s_i, s_j'\right)=&\ \max_{a} \Big\{\big|R(s_i,a)-R^\prime(s_j',a)\big|\notag\\
    &\quad\  +\gamma W_1\big(\mathbb{P}(\cdot|s_i,a), \mathbb{P}^\prime(\cdot|s_j',a) ;d_{n-1}\big)\Big\}.
\end{align}
Then $d_n$ converges to a fixed point $\bar{d}$ uniformly with $n\rightarrow\infty$, which corresponds to the DT-BSM in Definition \ref{defi:DT-BSM}, and the convergence of $d_n$ to $\bar{d}$ satisfies
\begin{equation}
        \bar{d}(s_i, s_j')-d_n(s_i, s_j') \leq \frac{\gamma^n R_\textnormal{max}}{1-\gamma},
\end{equation}
for all $i$ and $j$.
\end{theorem}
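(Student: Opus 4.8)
The plan is to show that the sequence $\{d_n\}$ is a Cauchy sequence under the sup-norm by establishing that the recursion (\ref{eq:dn}) is a $\gamma$-contraction on the space of maps $\mathcal{S}\times\mathcal{S}\to[0,\frac{R_\text{max}}{1-\gamma}]$, and then to invoke the Banach fixed-point theorem to obtain the unique fixed point $\bar d$. First I would verify that the recursion is well-defined, i.e., that if $d_{n-1}$ takes values in $[0,\frac{R_\text{max}}{1-\gamma}]$ then so does $d_n$: nonnegativity is clear from the absolute value and the nonnegativity of $W_1$, and the upper bound follows because $|R(s_i,a)-R'(s_j',a)|\le R_\text{max}$ and $W_1(\mathbb{P}(\cdot|s_i,a),\mathbb{P}'(\cdot|s_j',a);d_{n-1})\le \max_{k,l} d_{n-1}(s_k,s_l')\le \frac{R_\text{max}}{1-\gamma}$, so that $d_n\le R_\text{max}+\gamma\cdot\frac{R_\text{max}}{1-\gamma}=\frac{R_\text{max}}{1-\gamma}$.

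Next I would establish the contraction property. Given two cost functions $d$ and $d'$, I want $\|F(d)-F(d')\|_\infty\le\gamma\|d-d'\|_\infty$, where $F$ denotes the operator defined by the right-hand side of (\ref{eq:dn}). Since the reward term cancels and $\max_a$ is $1$-Lipschitz in the sup-norm, it suffices to bound $|W_1(P,Q;d)-W_1(P,Q;d')|$ for fixed distributions $P,Q$. Here I would use the primal (linear program) characterization (\ref{LP0})--(\ref{LP0_cond_3}): if $\boldsymbol{\Lambda}^\star$ is optimal for the cost $d'$, then by (\ref{leq_ineq}) we have $W_1(P,Q;d)\le\sum_{i,j}\lambda^\star_{i,j}d(s_i,s_j')\le\sum_{i,j}\lambda^\star_{i,j}\big(d'(s_i,s_j')+\|d-d'\|_\infty\big)=W_1(P,Q;d')+\|d-d'\|_\infty$, using that the $\lambda^\star_{i,j}$ sum to $1$ (since $P$ is a probability distribution). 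By symmetry the reverse inequality holds, giving $|W_1(P,Q;d)-W_1(P,Q;d')|\le\|d-d'\|_\infty$, and hence the $\gamma$ factor yields the contraction.

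With the contraction in hand, Banach's fixed-point theorem on the complete metric space of bounded maps $\mathcal{S}\times\mathcal{S}\to[0,\frac{R_\text{max}}{1-\gamma}]$ (complete under the sup-norm, and nonempty and closed as a subset) gives a unique fixed point $\bar d$ to which $d_n$ converges uniformly, and this fixed point satisfies exactly the recursion in Definition~\ref{defi:DT-BSM}. For the explicit rate, I would argue that $\|d_{n+1}-d_n\|_\infty\le\gamma\|d_n-d_{n-1}\|_\infty$, and since $\|d_1-d_0\|_\infty=\|d_1\|_\infty\le R_\text{max}$, iterating gives $\|d_{n+1}-d_n\|_\infty\le\gamma^n R_\text{max}$; then the triangle inequality over the tail yields $\bar d(s_i,s_j')-d_n(s_i,s_j')\le\sum_{m\ge n}\|d_{m+1}-d_m\|_\infty\le R_\text{max}\sum_{m\ge n}\gamma^m=\frac{\gamma^n R_\text{max}}{1-\gamma}$, which is the claimed bound. (The one-sided form of the inequality, without absolute value, reflects the monotonicity $d_n\le d_{n+1}\le\bar d$, which can be shown by an easy induction starting from $d_0\equiv 0\le d_1$, but is not needed for the stated estimate.) The main obstacle I anticipate is handling the Wasserstein term cleanly when $d(s_i,s_i')\neq 0$ — the usual Kantorovich--Rubinstein reduction to $1$-Lipschitz test functions is unavailable here, so the argument must go through the primal transportation-plan formulation and crucially exploit that the transport masses sum to one; everything else is routine application of the contraction-mapping machinery.
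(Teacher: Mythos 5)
Your proof is correct, but it takes a genuinely different route from the paper. The paper splits the theorem into two separate arguments: existence of $\bar{d}$ via the Knaster--Tarski fixed-point theorem on the complete lattice $\mathcal{M}$ (which requires proving Scott-continuity of $F_{W_1}$ and $F$ in Lemmas~\ref{lemma:conti:W1} and~\ref{lemma:F}, using both the primal and dual LP characterizations of $W_1$), and then a separate induction in Appendix~B for the rate $\bar{d}-d_n\leq\frac{\gamma^n R_\text{max}}{1-\gamma}$, built on the one-sided estimate $W_1(P,Q;\bar{d})\leq\max_{i,j}\{\bar{d}(s_i,s_j')-d_n(s_i,s_j')\}+W_1(P,Q;d_n)$. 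You instead prove the two-sided Lipschitz bound $|W_1(P,Q;d)-W_1(P,Q;d')|\leq\|d-d'\|_\infty$ from the primal formulation (the key observation being that any feasible transport plan has total mass one, so the Kantorovich--Rubinstein reduction is not needed even though $d(s_i,s_i')\neq 0$), conclude that $F$ is a $\gamma$-contraction, and get existence, convergence, and the rate in one shot from Banach. The core technical ingredient is essentially the same — inequality (\ref{leq_ineq}) applied to a plan that is optimal for one cost and feasible for the other — but your packaging buys two things the paper's argument does not explicitly deliver: uniqueness of the fixed point (Knaster--Tarski only yields the \emph{least} fixed point, so strictly speaking Definition~\ref{defi:DT-BSM} is only unambiguous once uniqueness is known) and a two-sided error bound $\|\bar{d}-d_n\|_\infty\leq\frac{\gamma^n R_\text{max}}{1-\gamma}$, of which the stated one-sided inequality is a consequence. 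The paper's lattice-theoretic route, in turn, makes the monotone-increasing structure of $\{d_n\}$ explicit, which it reuses elsewhere; you correctly note that this monotonicity also follows by an easy induction and is not needed for the claimed estimate. The only point worth making explicit in a final write-up is that completeness of the function space under the sup-norm is immediate because $\mathcal{S}$ is finite, so Banach applies without further fuss.
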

\begin{proof}
    The existence of $\bar{d}$ is established through the Knaster-Tarski fixed-point theorem \cite{tarski1955lattice} and the continuity of the recursive function in (\ref{eq:dn}), which is elaborated in the remaining discussions of this subsection. The convergence of $d_n$ to $\bar{d}$ is demonstrated using inequality (\ref{leq_ineq}) and mathematical induction, with details provided in Appendix~B.
\end{proof}

The following proves the existence of $\bar{d}$. Here, we introduce the Knaster-Tarski fixed-point theorem. Let $(\mathcal{X}, \preceq)$ be a partial order, which means certain pairs of elements within the set $\mathcal{X}$ are comparable under the homogeneous relation $\preceq$ \cite{tarski1955lattice}. If this partial order has least upper bounds and greatest lower bounds for its arbitrary subsets, it is called a complete lattice. The Knaster-Tarski fixed-point theorem asserts that for a continuous function on a complete lattice, the iterative application of this function to the least element of the lattice converges to a fixed point $\bar{x}$, which satisfies $\bar{x}=f(\bar{x})$. Formally, the theorem is stated as follows.
\begin{lemma}[Knaster-Tarski Fixed-Point Theorem \cite{tarski1955lattice}]\label{lemma:fixedpoint}  
 Let $(\mathcal{X}, \preceq)$ be a complete lattice and $f: \mathcal{X} \rightarrow \mathcal{X}$ be a continuous function $\Rightarrow$ Then $f$ has a least fixed point, given by 
 \begin{equation}
     \textnormal{fix}(f) = \sqcup_{n \in \mathbb{N}} f^{(n)}(\ubar{x}),
 \end{equation}
 where $\ubar{x}$ is the least element of $\mathcal{X}$, $\sqcup$ denotes the least upper bound, $f^{(n)}(\ubar{x})=f(f^{(n-1)}(\ubar{x}))$, and $f^{(1)}(\ubar{x})=f(\ubar{x})$. Here, the continuity of $f$ is defined such that for any increasing sequence $\{x_n\}$ in $\mathcal{X}$, it satisfies
 \begin{equation}\label{eq:continu}
     f\left(\sqcup_{n \in \mathbb{N}}\left\{x_n\right\}\right)=\sqcup_{n \in \mathbb{N}}\left\{f\left(x_n\right)\right\}.
 \end{equation}
\end{lemma}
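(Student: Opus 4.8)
The plan is to recognize this as the Kleene form of the fixed-point theorem and to prove it directly from the two hypotheses that are actually invoked: that $(\mathcal{X},\preceq)$ is a complete lattice, so that suprema of arbitrary subsets (in particular of increasing sequences) exist, and that $f$ preserves suprema of increasing sequences in the sense of~(\ref{eq:continu}). The argument splits into four short steps: derive monotonicity of $f$, build an ascending chain starting from $\ubar{x}$, take its supremum and verify it is a fixed point, and finally show this supremum sits below every fixed point.

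First I would show that~(\ref{eq:continu}) forces $f$ to be monotone. Given $x\preceq y$, apply~(\ref{eq:continu}) to the increasing sequence $x_1=x$, $x_n=y$ for $n\ge 2$, whose supremum is $y$; this yields $f(y)=\sqcup_{n}\{f(x_n)\}=f(x)\sqcup f(y)$, hence $f(x)\preceq f(y)$. Next, since $\ubar{x}$ is the least element of $\mathcal{X}$ we have $\ubar{x}\preceq f(\ubar{x})$, and monotonicity together with induction gives $f^{(n)}(\ubar{x})\preceq f^{(n+1)}(\ubar{x})$ for all $n$, where we set $f^{(0)}(\ubar{x}):=\ubar{x}$. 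Thus $\{f^{(n)}(\ubar{x})\}_{n\in\mathbb{N}}$ is an increasing sequence, and by completeness of the lattice the supremum $\bar{x}:=\sqcup_{n}f^{(n)}(\ubar{x})$ exists.

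To see that $\bar{x}$ is a fixed point, apply~(\ref{eq:continu}) to this very chain: $f(\bar{x})=f\big(\sqcup_{n}f^{(n)}(\ubar{x})\big)=\sqcup_{n}f^{(n+1)}(\ubar{x})$. Since the omitted term $f^{(0)}(\ubar{x})=\ubar{x}$ is dominated by $f^{(1)}(\ubar{x})$, removing it does not change the join, so $\sqcup_{n}f^{(n+1)}(\ubar{x})=\sqcup_{n}f^{(n)}(\ubar{x})=\bar{x}$, i.e.\ $f(\bar{x})=\bar{x}$. For minimality, let $y$ be any point with $f(y)=y$. Then $\ubar{x}\preceq y$, and applying monotonicity inductively gives $f^{(n)}(\ubar{x})\preceq f^{(n)}(y)=y$ for every $n$, so $y$ is an upper bound of the chain and therefore $\bar{x}\preceq y$. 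This identifies $\bar{x}=\sqcup_{n}f^{(n)}(\ubar{x})$ as the least fixed point $\textnormal{fix}(f)$.

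I do not expect a genuine obstacle, as the statement is classical; the only points needing care are the derivation of monotonicity from sequential continuity via the constant-tail sequence, and the minor bookkeeping of whether $\ubar{x}$ itself is included in the join (harmless either way, as used above). The real work in the surrounding subsection lies not in this abstract lemma but in verifying that the concrete operator defined in~(\ref{eq:dn}) — acting on the space of maps $\mathcal{S}\times\mathcal{S}\to[0,\frac{R_\textnormal{max}}{1-\gamma}]$ ordered pointwise, which is a complete lattice — indeed satisfies~(\ref{eq:continu}); that reduces to continuity of $W_1(\cdot,\cdot;d)$ in its cost argument $d$ and of the $\max$ over the finite action set, and is the step I would handle separately from Lemma~\ref{lemma:fixedpoint} itself.
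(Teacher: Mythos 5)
Your proof is correct. The paper itself offers no proof of this lemma --- it is stated as a known result with a citation to Tarski's 1955 paper --- so there is no in-paper argument to compare against; your self-contained derivation is the standard Kleene-iteration argument, and you are right that the statement as given (countable join of iterates from the least element, under chain-continuity) is really the Kleene form rather than Tarski's theorem proper, which assumes only monotonicity. The one step that is not pure boilerplate, extracting monotonicity from sequential continuity via the constant-tail sequence $x, y, y, \dots$, is handled correctly, and your remark that including or omitting $f^{(0)}(\ubar{x}) = \ubar{x}$ in the join is immaterial (since it is dominated by $f^{(1)}(\ubar{x})$) disposes of the only indexing ambiguity in the paper's statement. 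You are also right that the substantive work in this section of the paper is the verification that the operator $F$ of (\ref{defin:F}) satisfies (\ref{eq:continu}) on the lattice $\mathcal{M}$, which the paper carries out in Lemmas \ref{lemma:conti:W1} and \ref{lemma:F} and Appendix~A, separately from this abstract lemma.
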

Let $\mathcal{M}$ denote the set of all maps, say $d$ and $d'$, that satisfy $\mathcal{S} \times \mathcal{S} \rightarrow [0,\frac{R_\text{max}}{1-\gamma}]$. Equip $\mathcal{M}$ with the usual pointwise ordering: denoting $d\leq d^{\prime}$ if and only if $d(s_i,s_j')\leq d^{\prime}(s_i,s_j')$ for any $i$ and $j$. Then $\mathcal{M}$ forms a complete lattice with the least element $d_0$, i.e., the constant zero function. Regard the recursive definition (\ref{eq:dn}) as a function of $d$ and accordingly define $F: \mathcal{M} \rightarrow \mathcal{M}$ by
\begin{align}
    F\left(s_i, s_j';d\right)=\max_{a}& \Big\{\big|R(s_i,a)-R^\prime(s_j',a)\big|\notag\\
    &+\gamma W_1\big(\mathbb{P}(\cdot|s_i,a), \mathbb{P}^\prime(\cdot|s_j',a);d \big)\Big\}.\label{defin:F}
\end{align}
Utilizing the Knaster-Tarski fixed-point theorem, the existence of $\bar{d}$ can be achieved if the continuity of $F$ on $\mathcal{M}$ holds. 

We first prove the continuity of the second term in $F$. Define $F_{W_1}: \mathcal{M} \rightarrow \mathcal{M}$ by
\begin{equation}\label{defin:Fw}
\begin{aligned}
    F_{W_1}\left(s_i, s_j';d\right)=&W_1\big(\mathbb{P}(\cdot|s_i,a), \mathbb{P}^\prime(\cdot|s_j',a);d \big).
\end{aligned}
\end{equation}
\begin{lemma} \label{lemma:conti:W1}
    $F_{W_1}$ is continuous on $\mathcal{M}$.
\end{lemma}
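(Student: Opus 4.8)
The plan is to verify the Knaster--Tarski continuity condition (\ref{eq:continu}) for $F_{W_1}$ directly: given any increasing sequence $\{d_n\}$ in $\mathcal{M}$ with pointwise supremum $d_\infty = \sqcup_{n}\{d_n\} \in \mathcal{M}$, I must show that $F_{W_1}(s_i,s_j';d_\infty) = \sup_{n} F_{W_1}(s_i,s_j';d_n)$ for every $i,j$ (and every fixed action $a$). Throughout I fix $i,j,a$ and abbreviate $P = \mathbb{P}(\cdot|s_i,a)$ and $Q = \mathbb{P}^\prime(\cdot|s_j',a)$, so that the claim reduces to $W_1(P,Q;d_\infty) = \sup_n W_1(P,Q;d_n)$.

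The first step is monotonicity of $d \mapsto W_1(P,Q;d)$. From the primal program (\ref{LP0})--(\ref{LP0_cond_3}) I would observe that the feasible set of transportation plans $\boldsymbol{\Lambda}$ is determined entirely by the marginals $P$ and $Q$ and does not depend on the cost function. Since every $\lambda_{k,l}\ge 0$, if $d \le d'$ pointwise then $\sum_{k,l}\lambda_{k,l} d(s_k,s_l') \le \sum_{k,l}\lambda_{k,l} d'(s_k,s_l')$ for each feasible $\boldsymbol{\Lambda}$, and minimizing over $\boldsymbol{\Lambda}$ preserves the inequality, so $W_1(P,Q;d) \le W_1(P,Q;d')$. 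Applied along $\{d_n\}$, the sequence $\{W_1(P,Q;d_n)\}$ is nondecreasing and, since $d_n \le d_\infty$, bounded above by $W_1(P,Q;d_\infty)$; hence $\sup_n W_1(P,Q;d_n) \le W_1(P,Q;d_\infty)$. (The same averaging argument, together with $\sum_{k,l}\lambda_{k,l}=1$, also confirms that $F_{W_1}$ maps $\mathcal{M}$ into $\mathcal{M}$, since $W_1(P,Q;d)$ is then a convex combination of values of $d \in [0,\frac{R_\textnormal{max}}{1-\gamma}]$.)

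For the reverse inequality I would fix an optimal plan $\boldsymbol{\Lambda}^*$ attaining $W_1(P,Q;d_\infty)$; such a plan exists because the feasible region is a nonempty compact polytope and the objective is linear. The key point is that, feasibility being cost-independent, this single $\boldsymbol{\Lambda}^*$ is feasible for every $d_n$ as well, so inequality (\ref{leq_ineq}) yields $W_1(P,Q;d_n) \le \sum_{k,l}\lambda^*_{k,l} d_n(s_k,s_l')$ for all $n$. Since $\mathcal{S}$ is finite and $d_n(s_k,s_l') \uparrow d_\infty(s_k,s_l')$ for each pair, the finite sum on the right converges to $\sum_{k,l}\lambda^*_{k,l} d_\infty(s_k,s_l') = W_1(P,Q;d_\infty)$; letting $n\to\infty$ gives $\sup_n W_1(P,Q;d_n) \ge W_1(P,Q;d_\infty)$. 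Combining the two directions establishes the equality, and hence the continuity of $F_{W_1}$. I do not anticipate a genuine obstacle; the only subtlety worth flagging is the recognition that the constraint set of the optimal-transport linear program is independent of the cost function, which is precisely what allows one fixed transport plan to bridge all the $d_n$, after which the finiteness of the state space makes the interchange of limit and summation immediate.
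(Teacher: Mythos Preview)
Your monotonicity argument (the direction $\sup_n W_1(P,Q;d_n)\le W_1(P,Q;d_\infty)$) is fine, but the ``reverse inequality'' step is logically backwards. You fix an optimal plan $\boldsymbol{\Lambda}^*$ for the cost $d_\infty$ and correctly note that it is feasible for every $d_n$; inequality (\ref{leq_ineq}) then gives
\[
W_1(P,Q;d_n)\ \le\ \sum_{k,l}\lambda^*_{k,l}\, d_n(s_k,s_l'),
\]
with the right-hand side increasing to $\sum_{k,l}\lambda^*_{k,l}\, d_\infty(s_k,s_l')=W_1(P,Q;d_\infty)$. Passing to the limit in an inequality of the form $a_n\le b_n$ can only yield $\sup_n a_n\le W_1(P,Q;d_\infty)$, which merely reproves the monotonicity direction; it says nothing about $\sup_n W_1(P,Q;d_n)\ge W_1(P,Q;d_\infty)$. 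The single fixed plan $\boldsymbol{\Lambda}^*$ chosen for the \emph{limit} cost cannot give a lower bound on the $W_1$ values for the \emph{approximating} costs.

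The fix is to reverse the roles: for each $n$ take an optimal plan $\boldsymbol{\Lambda}^n$ for $W_1(P,Q;d_n)$ and use it as a feasible plan for $d_\infty$. Then
\[
W_1(P,Q;d_\infty)\ \le\ \sum_{k,l}\lambda^n_{k,l}\, d_\infty(s_k,s_l')
= W_1(P,Q;d_n)+\sum_{k,l}\lambda^n_{k,l}\bigl(d_\infty-d_n\bigr)(s_k,s_l'),
\]
and the error term is bounded by $\max_{k,l}\bigl(d_\infty(s_k,s_l')-d_n(s_k,s_l')\bigr)\to 0$ since $\mathcal{S}$ is finite. This is exactly the argument in the paper's Appendix~A. (The paper also proves monotonicity via the dual (\ref{LP})--(\ref{LP_cond_1}) rather than the primal, but your primal argument for that direction is equally valid.)
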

\begin{proof}
The continuity of $F_{W_1}$ is derived from inequalities (\ref{leq_ineq}) and (\ref{geq_ineq}), detailed in Appendix~A.
\end{proof}
\noindent Armed with Lemma~\ref{lemma:conti:W1}, we are ready to establish the continuity of $F$ as follows.
\begin{lemma}\label{lemma:F}
    $F$ is continuous on $\mathcal{M}$.
\end{lemma}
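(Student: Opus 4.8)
The plan is to obtain the continuity of $F$ from that of $F_{W_1}$ (Lemma~\ref{lemma:conti:W1}) by checking that the three operations separating the two maps — scaling by the positive constant $\gamma$, adding the $d$-independent term $|R(s_i,a)-R^\prime(s_j',a)|$, and taking the maximum over $a\in\mathcal{A}$ — each preserve the continuity condition (\ref{eq:continu}). First I would fix an action $a$ and set $G_a(s_i,s_j';d)=|R(s_i,a)-R^\prime(s_j',a)|+\gamma W_1\big(\mathbb{P}(\cdot|s_i,a),\mathbb{P}^\prime(\cdot|s_j',a);d\big)$. Because the order on $\mathcal{M}$ and the least upper bound $\sqcup$ are defined pointwise, it suffices to argue for each fixed pair $(s_i,s_j')$: for an increasing sequence $\{d_n\}\subset\mathcal{M}$, the elementary scalar identities $\gamma\sup_n t_n=\sup_n(\gamma t_n)$ and $c+\sup_n t_n=\sup_n(c+t_n)$, valid for $\gamma>0$ and any real $c$, combine with Lemma~\ref{lemma:conti:W1} applied to the Wasserstein term to give $G_a(s_i,s_j';\sqcup_n d_n)=\sqcup_n G_a(s_i,s_j';d_n)$. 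Hence each $G_a$ is continuous on $\mathcal{M}$; note here that $c=|R(s_i,a)-R^\prime(s_j',a)|$ is genuinely independent of $d$, which is what lets it pass through the supremum.

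Next I would handle the maximization. Since $F(s_i,s_j';d)=\max_{a\in\mathcal{A}}G_a(s_i,s_j';d)$ with $\mathcal{A}$ finite, it remains to see that a finite maximum of continuous maps is continuous, i.e. that for any doubly indexed family of reals $\{b_{a,n}\}$ one has $\sup_n\max_a b_{a,n}=\max_a\sup_n b_{a,n}$. The inequality ``$\le$'' holds because $\max_a b_{a,n}\le\max_a\sup_m b_{a,m}$ for every $n$, and ``$\ge$'' because $\sup_m b_{a,m}\le\sup_m\max_{a'}b_{a',m}$ for every $a$. Taking $b_{a,n}=G_a(s_i,s_j';d_n)$ and chaining with the previous paragraph gives $F(s_i,s_j';\sqcup_n d_n)=\max_a G_a(s_i,s_j';\sqcup_n d_n)=\max_a\sqcup_n G_a(s_i,s_j';d_n)=\sqcup_n\max_a G_a(s_i,s_j';d_n)=\sqcup_n F(s_i,s_j';d_n)$, which is exactly (\ref{eq:continu}) for $F$, so $F$ is continuous on $\mathcal{M}$ and Lemma~\ref{lemma:fixedpoint} then applies.

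As a preliminary I would also record that $F$ is a genuine self-map $\mathcal{M}\to\mathcal{M}$: nonnegativity of $F$ is immediate, and since $d\le R_\text{max}/(1-\gamma)$ pointwise forces $W_1(\cdot,\cdot;d)\le R_\text{max}/(1-\gamma)$ (the transport cost in (\ref{LP0}) is a convex combination of values of $d$ by (\ref{LP0_cond_1})--(\ref{LP0_cond_3})), we get $F\le R_\text{max}+\gamma R_\text{max}/(1-\gamma)=R_\text{max}/(1-\gamma)$. I do not expect a real obstacle here: once Lemma~\ref{lemma:conti:W1} is available the argument is elementary order-theoretic bookkeeping, and the only points needing mild care are that every identity must be read pointwise in $(s_i,s_j')$ and that the reward difference is constant in $d$.
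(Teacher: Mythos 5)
Your proof is correct and follows essentially the same route as the paper's: reduce to Lemma~\ref{lemma:conti:W1} and then push the pointwise supremum through the constant reward term and the finite maximum over actions, which is exactly what steps $(b)$ and $(c)$ of the paper's argument do. You are somewhat more explicit than the paper in justifying the interchange $\sqcup_n \max_a = \max_a \sqcup_n$ (iterated suprema commute) and in checking that $F$ maps $\mathcal{M}$ into itself, but these are refinements of the same argument rather than a different approach.
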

\begin{proof}
We follow the definition of continuity in (\ref{eq:continu}). Consider an arbitrary increasing sequence $\{\rho_n\}$ on $\mathcal{M}$, for any $i$ and $j$, we have
\begin{align}
&\ F(s_i,s_j';\sqcup_{n\in\mathbb{N}}\{\rho_n\}) \notag\\
\overset{(a)}{=}&\ \max_{a} \Big\{\big|R(s_i,a)-R^\prime(s_j',a)\big|\notag\\
&\qquad\ +\gamma W_1\big(\mathbb{P}(\cdot|s_i,a), \mathbb{P}^\prime(\cdot|s_j',a) ;\sqcup_{n\in\mathbb{N}}\{\rho_n\}\big)\Big\}\notag\\
\overset{(b)}{=}&\ \max_{a} \Big\{\big|R(s_i,a)-R^\prime(s_j',a)\big|\notag\\
&\qquad\ +\gamma \sqcup_{n\in\mathbb{N}} \big\{W_1\big(\mathbb{P}(\cdot|s_i,a), \mathbb{P}^\prime(\cdot|s_j',a) ;\rho_n\big)\big\}\Big\} \notag\\
\overset{(c)}{=}&\ \sqcup_{n\in\mathbb{N}}\Big\{\max_{a}\big\{\big|R(s_i,a)-R^\prime(s_j',a)\big|\notag\\
&\qquad\ +\gamma W_1\big(\mathbb{P}(\cdot|s_i,a), \mathbb{P}^\prime(\cdot|s_j',a) ;\rho_n\big)\big\}\Big\} \notag\\
\overset{(d)}{=}&\ \sqcup_{n\in\mathbb{N}}\{F(s_i,s_j';\rho_n)\}.
\end{align}
Here, steps~$(a)$ and $(d)$ follow from the definition of $F$ given in (\ref{defin:F}), step~$(b)$ applies the continuity of $F_{W_1}$ established in Lemma~\ref{lemma:conti:W1}, and step~$(c)$ relies on the fact that $R(s_i,a)$ and $R^\prime(s_j',a)$ are independent of $n$.
\end{proof}
Now the existence of $\bar{d}$ in Theorem \ref{the:BSM} is established using Lemma~\ref{lemma:fixedpoint} and Lemma~\ref{lemma:F}.

\subsection{Properties of DT-BSM}
As a metric to evaluate the discrepancy between the DT MDP and the real MDP, DT-BSM possesses several unique properties. First, DT-BSM is constructed by modifying the right-hand side of (\ref{eq:7}), thereby preserving the bound on the difference between two optimal value functions as follows.
\begin{corol} \label{bisimineq}
For any $i$ and $j$,
\begin{equation}
    |V_\textnormal{real}^*(s_i)-V_\textnormal{DT}^*(s_j')|\leq \bar{d}(s_i,s_j').
\end{equation}
\end{corol}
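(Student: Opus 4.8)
The plan is to prove the inequality by induction on the recursion depth, carrying the $n$-step optimal value functions of Theorem~2 in lockstep with the approximants $d_n$ of Theorem~\ref{the:BSM}, and then passing to the limit. Write $V_\text{real}^{(n)}$ and $V_\text{DT}^{(n)}$ for the $n$-step optimal value functions obtained by instantiating the recursion (\ref{define:V}) with $(\mathbb{P},R)$ and $(\mathbb{P}',R')$ respectively. The claim to be proved by induction is that for every $n\in\mathbb{N}$ and all $i,j$,
\[
\bigl|V_\text{real}^{(n)}(s_i)-V_\text{DT}^{(n)}(s_j')\bigr|\le d_n(s_i,s_j').
\]
The base case $n=0$ is immediate, since $V^{(0)}\equiv 0$ and $d_0\equiv 0$. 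Granting the claim, the corollary follows by letting $n\to\infty$: $V^{(n)}\to V^*$ uniformly (Theorem~2) and $d_n\to\bar d$ uniformly (Theorem~\ref{the:BSM}), so the pointwise inequality is preserved in the limit.

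For the inductive step I would first reproduce the chain of inequalities (\ref{eq:7}) verbatim, with $V^*$ on the left replaced by $V^{(n)}$ and $V^*$ on the right replaced by $V^{(n-1)}$ — the only ingredients used there are the Bellman-type recursion (\ref{define:V}), the elementary bound $|\max_a f(a)-\max_a g(a)|\le\max_a|f(a)-g(a)|$, and the triangle inequality — to obtain
\[
\bigl|V_\text{real}^{(n)}(s_i)-V_\text{DT}^{(n)}(s_j')\bigr|\le\max_a\Bigl\{\bigl|R(s_i,a)-R'(s_j',a)\bigr|+\gamma\Bigl|\textstyle\sum_k \mathbb{P}(s_k|s_i,a)V_\text{real}^{(n-1)}(s_k)-\sum_k \mathbb{P}'(s_k'|s_j',a)V_\text{DT}^{(n-1)}(s_k')\Bigr|\Bigr\}.
\]
The heart of the argument is to bound the $V^{(n-1)}$-weighted transition discrepancy, in absolute value, by $W_1\bigl(\mathbb{P}(\cdot|s_i,a),\mathbb{P}'(\cdot|s_j',a);d_{n-1}\bigr)$. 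I would do this via the dual inequality (\ref{geq_ineq}): plugging in $\mu_k=V_\text{real}^{(n-1)}(s_k)$ and $\nu_k=V_\text{DT}^{(n-1)}(s_k')$, the dual feasibility condition (\ref{LP_cond_1}) becomes $V_\text{real}^{(n-1)}(s_k)-V_\text{DT}^{(n-1)}(s_l')\le d_{n-1}(s_k,s_l')$, which is precisely a consequence of the inductive hypothesis; this delivers one sign of the bound. Negating both $\boldsymbol{\mu}$ and $\boldsymbol{\nu}$ and applying (\ref{geq_ineq}) again — the flipped feasibility condition $V_\text{DT}^{(n-1)}(s_l')-V_\text{real}^{(n-1)}(s_k)\le d_{n-1}(s_k,s_l')$ is the other half of the inductive hypothesis — delivers the opposite sign. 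Substituting this Wasserstein bound back and taking the maximum over $a$ yields exactly the recursion (\ref{eq:dn}) that defines $d_n$, which closes the induction.

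The main obstacle, and the reason this cannot be collapsed into a one-line application of Kantorovich duality directly to $V^*$, is circularity: to bound the transition term for the fixed points $V^*$ and $\bar d$ one would want to feed $\mu=V_\text{real}^*$, $\nu=V_\text{DT}^*$ into (\ref{geq_ineq}), but the admissibility of that choice is exactly the constraint $V_\text{real}^*(s_i)-V_\text{DT}^*(s_j')\le\bar d(s_i,s_j')$ — i.e., the statement being proved. Routing the argument through the finite-horizon approximants $V^{(n)},d_n$ breaks the loop, because the hypothesis at level $n-1$ is precisely what licenses the dual choice at level $n$; the two uniform-convergence results then transfer the conclusion back to $V^*$ and $\bar d$. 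Everything else is the routine bookkeeping already performed in deriving (\ref{eq:7}).
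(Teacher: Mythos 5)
Your proposal is correct and takes essentially the same route as the paper's own proof: induction on the finite-horizon approximants $V^{(n)}$ and $d_n$, reusing the chain of inequalities behind (\ref{eq:7}) at each level, bounding the $V^{(n-1)}$-weighted transition discrepancy via the dual inequality (\ref{geq_ineq}) with feasibility supplied by the induction hypothesis, and passing to the limit using the two uniform-convergence results. Your explicit two-sided application of (\ref{geq_ineq}) (negating the dual pair to get the other sign of the absolute value) is a slightly more careful rendering of a step the paper leaves implicit in (\ref{vv_ineq}); otherwise the arguments coincide.
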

\begin{proof}
    This inequality is proved through induction with the recursive definition of $V^*$ in (\ref{define:V}). For the base case, we have
        \begin{align}\label{corol1-1}
            \left|V_\text{real}^{(1)} (s_i)-V_\text{DT}^{(1)} (s_j')\right| =&\  \Big|\max_{a}R(s_i,a) - \max_{a}R^\prime(s_j',a)\Big| \notag\\
            \leq &\  \max_{a} \left|R(s_i,a) - R^\prime(s_j',a)\right|\notag\\
            =&\ d_1(s_i,s_j').
        \end{align}
    By the induction hypothesis, we assume that for any $i$, $j$, and an arbitrary $n$,
    \begin{align}\label{vv_ineq}
        V_\text{real}^{(n)} (s_i)-V_\text{DT}^{(n)} (s_j') \leq&\ |V_\text{real}^{(n)} (s_i)-V_\text{DT}^{(n)} (s_j')|\notag\\
        \leq&\ d_n(s_i,s_j').
    \end{align}
    Then the induction follows
        \begin{align}\label{corol1-3}
           &\ \left|V_\text{real}^{(n+1)} (s_i)-V_\text{DT}^{(n+1)} (s_j')\right| \notag\\
           \overset{(a)}{=} &\ \Big|\max_{a}\Big\{R(s_i,a)+\gamma\sum_{k=1}^{|\mathcal{S}|}\mathbb{P}(s_k|s_i,a)V_\text{real}^{(n)}(s_k)\Big\}\notag\\
           &\ -\max_{a}\Big\{R'(s_j',a)+\gamma\sum_{k=1}^{|\mathcal{S}|}\mathbb{P}'(s_k'|s_j',a)V_\text{DT}^{(n)}(s_k')\Big\}\Big|\notag\\
           \leq&\ \max_{a} \Big\{\Big|\Big(R(s_i,a) +\gamma \sum_{k=1}^{|\mathcal{S}|} \mathbb{P} (s_k|s_i,a) V_\text{real}^{(n)}(s_k)\Big)  \notag\\
           &\qquad\quad\ - \Big(R^\prime (s_j',a) + \gamma \sum_{k=1}^{|\mathcal{S}|}\mathbb{P}^\prime(s_k'|s_j',a) V_\text{DT}^{(n)}(s_k')\Big)\Big|\Big\}\notag\\
           \overset{(b)}{\leq} &\ \max_{a} \Big\{\Big|R(s_i,a) - R^\prime (s_j',a)\Big|  \notag\\
           &\quad +\gamma\Big|\sum_{k=1}^{|\mathcal{S}|} (\mathbb{P} (s_k|s_i,a) V_\text{real}^{(n)}(s_k)- \mathbb{P}^\prime(s_k'|s_j',a) V_\text{DT}^{(n)}(s_k'))\Big|\Big\}\notag\\
           \overset{(c)}{\leq}&\ \max_{a} \!\Big\{\!\!\left|R(s_i,\!a)\!-\! R^\prime (s_j',\!a)\right|\! +\!\gamma W_1\!\big(\mathbb{P}(\cdot|s_i,\!a),\mathbb{P}^\prime(\cdot|s_j',\!a);d_n\!\big)\!\Big\}\notag\\
           \overset{(d)}{=}&\ d_{n+1}(s_i,s_j').
        \end{align}
    Here, steps~$(a)$ and $(d)$ stem from definitions in (\ref{define:V}) and (\ref{eq:dn}), respectively. Step~$(b)$ follows from the triangle inequality. Step~$(c)$ is derived from inequality (\ref{geq_ineq}) and the fact that, according to (\ref{vv_ineq}), $\big( V_\text{real}^{(n)}(s_i) \big)_{i=1}^{|\mathcal{S}|}$ and $\big( V_\text{DT}^{(n)}(s_i') \big)_{i=1}^{|\mathcal{S}|}$ form a pair of $|\mathcal{S}|$-length vectors satisfying condition (\ref{LP_cond_1}) with cost function $d_n$. 
    
   Now from (\ref{corol1-1})-(\ref{corol1-3}), we have $|V_\text{real}^{(n)}(s_i)-V_\text{DT}^{(n)}(s_j')|\leq d_{n}(s_i,s_j'),\ \forall n\in\mathbb{N}$. Taking $n\rightarrow \infty$ yields the desired result.
\end{proof}
The DT-BSM also exhibits a distinct geometric property. For comparison, BSM is a pseudometric which holds the three axioms of a conventional distance metric, that is for all $\{s_i,s_j,s_k\}\in \mathcal{S}$: (1) $d(s_i,s_i)=0$; (2) $d(s_i,s_j)=d(s_j,s_i)$; (3) $d(s_i,s_k)\leq d(s_i,s_j)+d(s_j,s_k)$. Due to the discrepancy in terms of reward and transition probability between the DT MDP and the real MDP, the first two properties are unfortunately no longer held in general by the DT-BSM\footnote{Strictly, the DT bisimulation metric is not a \textit{metric} as it does not satisfy the three axioms like the bisimulation metric. We refer to it as a DT bisimulation metric (DT-BSM) to acknowledge its extension from the bisimulation metric (BSM).}. However, the DT-BSM retains a unique modified form of the triangle inequality, which we refer to as the quadrilateral inequality, as illustrated in Fig. \ref{fig:Inequality} and stated as follows. 
\begin{figure}[t]
    \centering
    \includegraphics[width=0.75\linewidth]{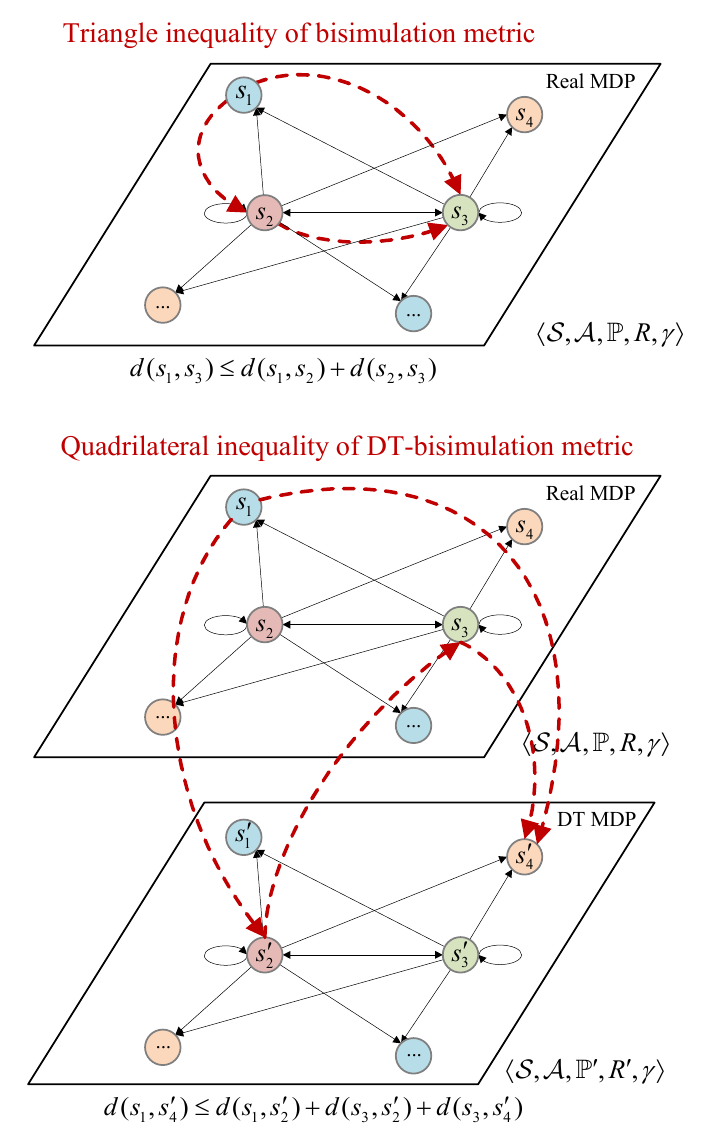}
    \caption{Quadrilateral inequality of DT-BSM.}
    \label{fig:Inequality}
\end{figure}
\begin{corol} \label{inequality}
    For any $\{s_i, s_j',s_k,s_l'\} \in \mathcal{S}$ and $n\in\mathbb{N}$:
    \begin{equation}
        d_n(s_i,s_l')\leq d_n(s_i,s_j')+d_n(s_k,s_j')+d_n(s_k,s_l').
    \end{equation}
\end{corol}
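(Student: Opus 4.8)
The plan is to prove the quadrilateral inequality by induction on $n$, exploiting the recursive definition (\ref{eq:dn}) together with a suitable triangle-type inequality for the Wasserstein distance. The base case $n=0$ is immediate since $d_0\equiv 0$, so the inequality reads $0\le 0$. For the inductive step I would assume $d_n(s_i,s_l')\le d_n(s_i,s_j')+d_n(s_k,s_j')+d_n(s_k,s_l')$ for all quadruples of states, and then bound $d_{n+1}(s_i,s_l')$. Fixing the maximizing action $a$ for $d_{n+1}(s_i,s_l')$, I would split $|R(s_i,a)-R'(s_l',a)|$ using the ordinary triangle inequality through the intermediate quantities $R'(s_j',a)$ and $R(s_k,a)$, getting $|R(s_i,a)-R'(s_j',a)|+|R(s_k,a)-R'(s_j',a)|+|R(s_k,a)-R'(s_l',a)|$; the reward terms then slot cleanly into three copies of the $F$-recursion evaluated at the three pairs on the right-hand side, once the corresponding Wasserstein terms are handled.

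The heart of the argument is therefore a ``quadrilateral inequality'' for $W_1$ itself: for probability distributions $P,Q,P',Q'$ on $\mathcal{S}$ and any cost function $d$ satisfying the quadrilateral inequality,
\begin{equation}
W_1(P,Q';d)\le W_1(P,Q;d)+W_1(P',Q;d)+W_1(P',Q';d).
\end{equation}
I would prove this by composing transportation plans. Let $\boldsymbol\Lambda^{1}$ be an optimal plan between $P$ and $Q$, $\boldsymbol\Lambda^{2}$ between $P'$ and $Q$, and $\boldsymbol\Lambda^{3}$ between $P'$ and $Q'$. Using the gluing lemma one assembles a plan between $P$ and $Q'$ whose cost, via the inductive hypothesis applied pointwise to $d_n(s_i,s_l')\le d_n(s_i,s_j')+d_n(s_k,s_j')+d_n(s_k,s_l')$ at the transported atoms, is at most the sum of the three individual optimal costs. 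Concretely, if $Q(s_j')>0$ for the indices involved, one routes mass from a source in $P$ to $Q$ (cost governed by $d_n(\cdot,s_j')$), then ``backwards'' from $Q$ to $P'$ along $\boldsymbol\Lambda^{2}$ (cost $d_n(s_k,s_j')$), then forward from $P'$ to $Q'$ along $\boldsymbol\Lambda^{3}$ (cost $d_n(s_k,s_l')$); summing and invoking (\ref{leq_ineq}) bounds $W_1(P,Q';d_n)$ by the three-term sum. Substituting $P=\mathbb{P}(\cdot|s_i,a)$, $Q=\mathbb{P}'(\cdot|s_j',a)$, $P'=\mathbb{P}(\cdot|s_k,a)$, $Q'=\mathbb{P}'(\cdot|s_l',a)$ and multiplying by $\gamma$ gives exactly the split needed to close the induction.

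The main obstacle I anticipate is the bookkeeping in the gluing step when the intermediate distribution $Q$ has zero mass on some states, since the naive composition $\boldsymbol\Lambda^{1}(\boldsymbol\Lambda^{2})^{-1}\boldsymbol\Lambda^{3}$ is only well-defined after normalizing by $Q(s_j')$; the clean way around this is to work with the disintegration of each plan and apply the standard gluing lemma for couplings rather than matrix manipulations, or equivalently to argue via the dual (\ref{LP}) form, where one adds the three dual-feasible pairs $(\boldsymbol\mu,\boldsymbol\nu)$ and checks that the sum is dual-feasible for the cost $d_n$ on the pair $(P,Q')$ precisely because $d_n$ obeys the quadrilateral inequality. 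I would most likely present the dual-based argument, as it sidesteps degenerate-mass issues entirely: feasibility of the summed potentials reduces, term by term, to the inductive hypothesis on $d_n$, and weak duality (\ref{geq_ineq})/(\ref{leq_ineq}) then yields the desired bound directly. Once the $W_1$ quadrilateral inequality is in hand, the remaining steps — reinserting the reward terms, taking the maximum over $a$, and letting $n\to\infty$ if a statement about $\bar d$ is also wanted — are routine.
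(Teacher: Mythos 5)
Your primary route --- induction on $n$, the double triangle inequality on the reward term, and a quadrilateral inequality for $W_1$ obtained by gluing optimal couplings of $(P,Q)$, $(P',Q)$, $(P',Q')$ along the shared marginals and applying the inductive hypothesis pointwise under the glued plan --- is correct and is exactly the approach the paper takes (its proof, deferred to supplementary material, is stated to be ``derived from the gluing lemma and induction''). The handling of the $\max_a$ (split before maximizing, using the same $a$ in all three terms) and the base case are fine.

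However, the dual variant that you say you would ``most likely present'' does not work as described, and since you prefer it over the gluing argument this is worth flagging. Summing three dual-feasible pairs and checking feasibility of the sum for the pair $(P,Q')$ gives, via weak duality (\ref{geq_ineq}), a \emph{lower} bound on $W_1(P,Q';d_n)$, which is the wrong direction. To get an upper bound dually you must start from an optimal pair $(\boldsymbol{\mu},\boldsymbol{\nu})$ for $W_1(P,Q';d_n)$ and telescope $\boldsymbol{\mu}^\top P-\boldsymbol{\nu}^\top Q'=(\boldsymbol{\mu}^\top P-\boldsymbol{\beta}^\top Q)+(\boldsymbol{\beta}^\top Q-\boldsymbol{\alpha}^\top P')+(\boldsymbol{\alpha}^\top P'-\boldsymbol{\nu}^\top Q')$; the middle leg then requires $\beta_j-\alpha_k\leq d_n(s_k,s_j')$, i.e.\ the potentials enter with \emph{reversed} signs, and since $d_n$ is not symmetric this is not ``term-by-term'' feasibility of summed potentials. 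One must explicitly construct the intermediate potentials, e.g.\ $\beta_j=\max_i\{\mu_i-d_n(s_i,s_j')\}$ and $\alpha_k=\max_j\{\beta_j-d_n(s_k,s_j')\}$, and only then does verifying the third constraint reduce to the inductive quadrilateral inequality via $\mu_i-\nu_l\leq d_n(s_i,s_l')\leq d_n(s_i,s_j')+d_n(s_k,s_j')+d_n(s_k,s_l')$. Either repair the dual argument along these lines or, more simply, keep the gluing argument: the degenerate-mass issue you worry about is handled by the standard disintegration form of the gluing lemma and causes no real difficulty in the finite-state setting here.
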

\begin{proof}
    The quadrilateral inequality of DT-BSM is derived from the gluing lemma \cite{villani2009optimal} and induction. The detailed proof is provided in the supplementary material.
\end{proof}
 When the middle two states in the quadrilateral inequality are identical, the expression reduces to a form analogous to the triangle inequality as
\begin{equation}
    d_n(s_i,s_k')\leq d_n(s_i,s_j')+d_n(s_j,s_j')+d_n(s_j,s_k').
\end{equation}

\section{Upper Bound of Sub-optimality in Policy Transfer}\label{sec:bound}

In this section, we present the proofs for the upper bounds of sub-optimality in policy transfer. The upper bounds outlined in Theorem~\ref{the:1} include the one derived from the previously defined DT-BSM $\bar{d}$, and the other based on the computationally efficient total variation distance. Proofs for the two bounds are respectively introduced in the following subsections.

\subsection{Performance Bound Using DT-BSM}
Here, we establish the first inequality in Theorem~\ref{the:1}, which bounds the sub-optimality of transferred policy through the newly defined DT-BSM.
\begin{lemma}\label{lemma:5}
   Given $V_\textnormal{DT}^*$, $V_\textnormal{DT}^\pi$, $V_\textnormal{real}^*$, $V_\textnormal{real}^\pi$, and $\bar{d}$ as defined above,
    \begin{equation}
    \|V_\textnormal{real}^*-V_\textnormal{real}^{\pi}\| \leq \frac2{1-\gamma}\max_{i}\bar{d}(s_i,s_i')+\frac{1+\gamma}{1-\gamma}\|V_\textnormal{DT}^*-V_\textnormal{DT}^{\pi}\|,
\end{equation}
where $\|V_\textnormal{real}^*-V_\textnormal{real}^{\pi}\|$ and $\|V_\textnormal{DT}^*-V_\textnormal{DT}^{\pi}\|$ are the upper bounds of the suboptimality of $\pi$, as defined in (\ref{opti-define}).
\end{lemma}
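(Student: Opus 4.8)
## Proof Plan for Lemma 6

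The plan is to decompose the real-world regret $V_\text{real}^*(s_i) - V_\text{real}^\pi(s_i)$ by inserting the DT value functions $V_\text{DT}^*$ and $V_\text{DT}^\pi$ as intermediate terms, so that
\[
V_\text{real}^*(s_i) - V_\text{real}^\pi(s_i) = \big(V_\text{real}^*(s_i) - V_\text{DT}^*(s_i')\big) + \big(V_\text{DT}^*(s_i') - V_\text{DT}^\pi(s_i')\big) + \big(V_\text{DT}^\pi(s_i') - V_\text{real}^\pi(s_i)\big).
\]
The middle term is bounded directly by $\|V_\text{DT}^*-V_\text{DT}^\pi\|$ by definition. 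The first term is bounded by $\max_i \bar d(s_i,s_i')$ using Corollary~1. So the crux is the third term: bounding the gap between the value of the \emph{same} policy $\pi$ evaluated in the DT MDP versus the real MDP, i.e.\ $|V_\text{DT}^\pi(s_i') - V_\text{real}^\pi(s_i)|$.

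First I would establish a policy-dependent analogue of Corollary~1. Using the Bellman equations (\ref{eq:v2}) for $V^\pi$ in each MDP, I would write
\[
V_\text{DT}^\pi(s_i') - V_\text{real}^\pi(s_i) = \big(R'(s_i',\pi(s_i')) - R(s_i,\pi(s_i))\big) + \gamma\Big(\textstyle\sum_k \mathbb{P}'(s_k'|s_i',\pi) V_\text{DT}^\pi(s_k') - \sum_k \mathbb{P}(s_k|s_i,\pi) V_\text{real}^\pi(s_k)\Big),
\]
take absolute values, apply the triangle inequality, and bound the reward difference by $\max_a|R(s_i,a)-R'(s_i',a)|$. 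For the transition term, the difficulty is that $V^\pi$ is not necessarily the function achieving the Wasserstein dual optimum, so I cannot directly invoke (\ref{geq_ineq}) the way Corollary~1 does for $V^*$. Instead I would split it as $\sum_k \mathbb{P}'(s_k'|\cdot)(V_\text{DT}^\pi(s_k') - V_\text{real}^\pi(s_k)) + \sum_k (\mathbb{P}'(s_k'|\cdot) - \mathbb{P}(s_k|\cdot))V_\text{real}^\pi(s_k)$. The first piece contributes $\gamma\|V_\text{DT}^\pi - V_\text{real}^\pi\|_\infty$ (a contraction term I will solve for at the end); the second piece is a difference of expectations of a fixed bounded function under the two transition kernels, which I bound either via the Wasserstein distance (after relating $V_\text{real}^\pi$ to a $\bar d$-Lipschitz-like dual-feasible vector) or more crudely, giving a contribution controlled by $\max_i \bar d(s_i,s_i')$ plus the DT sub-optimality. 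Collecting terms and solving the resulting self-referential inequality $\|V_\text{DT}^\pi - V_\text{real}^\pi\|_\infty \le (\text{stuff}) + \gamma\|V_\text{DT}^\pi - V_\text{real}^\pi\|_\infty$ yields a bound with a $\frac{1}{1-\gamma}$ factor.

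The main obstacle is the third term: unlike the optimal value functions, which satisfy a clean recursion that meshes exactly with the definition of $\bar d$, the policy value functions $V^\pi$ require an extra argument to connect them to $\bar d$. The cleanest route is probably to bound $|V_\text{DT}^\pi(s_i') - V_\text{real}^\pi(s_i)|$ by a triangle-type step through $V_\text{real}^*$ and $V_\text{DT}^*$ (using Corollary~1 and the sub-optimality definitions), rather than trying to re-derive a Wasserstein bound from scratch for $\pi$. Assembling the three pieces with their coefficients, and noting that the $V_\text{DT}^* - V_\text{DT}^\pi$ term appears both directly (coefficient $1$) and inside the bound for the third term (coefficient $\gamma/(1-\gamma)$ after the contraction), should produce exactly the stated weights $\frac{2}{1-\gamma}$ on $\max_i\bar d(s_i,s_i')$ and $\frac{1+\gamma}{1-\gamma}$ on $\|V_\text{DT}^*-V_\text{DT}^\pi\|$. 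I would finish by taking the maximum over $i$ on both sides.
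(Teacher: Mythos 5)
Your overall skeleton matches the paper's proof: the same three-term triangle decomposition through $V_\text{DT}^*$ and $V_\text{DT}^\pi$, Corollary~\ref{bisimineq} for the first term, the definition (\ref{opti-define}) for the second, a Bellman expansion of the third, and a final rearrangement of a self-referential inequality. The gap is in the one step you yourself flag as the crux. Your splitting of the transition term,
\begin{equation*}
\sum_k \mathbb{P}'(s_k'|\cdot)\bigl(V_\text{DT}^\pi(s_k')-V_\text{real}^\pi(s_k)\bigr)+\sum_k\bigl(\mathbb{P}'(s_k'|\cdot)-\mathbb{P}(s_k|\cdot)\bigr)V_\text{real}^\pi(s_k),
\end{equation*}
leaves a cross-kernel piece in which the \emph{same} function $V_\text{real}^\pi$ plays the role of both dual variables. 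To bound that piece by $W_1(\mathbb{P},\mathbb{P}';\bar d)$ via (\ref{geq_ineq}) you would need $V_\text{real}^\pi(s_i)-V_\text{real}^\pi(s_j)\leq\bar d(s_i,s_j')$ for all $i,j$, a Lipschitz-type property that is established nowhere (Corollary~\ref{bisimineq} gives dual feasibility only for the \emph{pair} $(V_\text{real}^*,V_\text{DT}^*)$), and substituting $V_\text{real}^*$ for $V_\text{real}^\pi$ does not help because the constraint would still pair $V_\text{real}^*$ with itself rather than with $V_\text{DT}^*$. Your fallback --- a top-level triangle step through $V_\text{real}^*$ and $V_\text{DT}^*$ --- gives $|V_\text{DT}^\pi-V_\text{real}^\pi|\leq\max_i\bar d(s_i,s_i')+\|V_\text{DT}^*-V_\text{DT}^\pi\|+\|V_\text{real}^*-V_\text{real}^\pi\|$ with coefficient $1$ on the real regret, so the assembled inequality has the form $x\leq c+x$ and cannot be rearranged.

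The paper resolves this with a different add-and-subtract \emph{inside} the discounted sum: it rewrites $\sum_k\mathbb{P}V_\text{real}^\pi-\sum_k\mathbb{P}'V_\text{DT}^\pi$ as the dual objective $\sum_k\mathbb{P}V_\text{real}^*-\sum_k\mathbb{P}'V_\text{DT}^*$ (feasible for cost $\bar d$ precisely by Corollary~\ref{bisimineq}, hence bounded by $W_1(\mathbb{P},\mathbb{P}';\bar d)$) plus two single-kernel expectations, $\sum_k\mathbb{P}(V_\text{real}^\pi-V_\text{real}^*)$ and $\sum_k\mathbb{P}'(V_\text{DT}^*-V_\text{DT}^\pi)$, each bounded by the corresponding sup-norm. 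This yields $|V_\text{DT}^\pi(s_i')-V_\text{real}^\pi(s_i)|\leq\bar d(s_i,s_i')+\gamma\|V_\text{real}^*-V_\text{real}^\pi\|+\gamma\|V_\text{DT}^*-V_\text{DT}^\pi\|$, so the real regret enters with coefficient $\gamma<1$ and the final rearrangement produces exactly the stated constants. Your plan needs this specific pairing of each kernel with its own optimal value function; as written, the cross-kernel term is not actually controlled.
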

\begin{proof}
By the triangle inequality, for any state $s_i$, we have
\begin{align}\label{eq33}
    |V_\text{real}^*(s_i)-V_\text{real}^{\pi}(s_i)|\leq&\  |V_\text{real}^*(s_i)-V_\text{DT}^*(s_i')| \notag\\
    +|V_\text{DT}^*(s_i')-&V_\text{DT}^{\pi}(s_i')|  +|V_\text{DT}^{\pi}(s_i')-V_\text{real}^{\pi}(s_i)|.
\end{align}
Within the right-hand side of this inequality, the first summation term $|V_\text{real}^*(s_i)-V_\text{DT}^*(s_i')|$ is upper bounded by $\max_{i}\bar{d}(s_i,s_i')$ according to Corollary \ref{bisimineq}, and $|V_\text{DT}^*(s_i')-V_\text{DT}^{\pi}(s_i')|$ is upper bounded by $\|V_\text{DT}^*-V_\text{DT}^{\pi}\|$ according to its definition in (\ref{opti-define}). For the last term, we have

\begin{align}
    &\ \Big|V_\text{real}^{\pi}(s_i)-V_\text{DT}^\pi(s_i')\Big|\notag\\
    \overset{(a)}{=}&\   \Big| \Big(R(s_i,\pi(s_i))+\gamma\sum_{k=1}^{|\mathcal{S}|} \mathbb{P}(s_k|s_i,\pi(s_i)) V_\text{real}^\pi(s_k))\Big) \notag\\
    &\ -\Big(R'(s_i',\pi(s_i'))+\gamma\sum_{k=1}^{|\mathcal{S}|} \mathbb{P}'(s_k'|s_i',\pi(s_i')) V_\text{DT}^\pi(s_k'))\Big) \Big|\notag\\
     \overset{(b)}{\leq} &\ \Big|R(s_i,\pi(s_i)) - R^\prime (s_i',\pi(s_i'))\Big|   \notag\\
    +\ &\gamma\Big|\sum_{k=1}^{|\mathcal{S}|}\big( \mathbb{P}(s_k|s_i,\pi(s_i)) V_\text{real}^\pi(s_k) -\mathbb{P}^\prime(s_k'|s_i',\pi(s_i')) V_\text{DT}^\pi(s_k')\big) \Big|\notag\\
    \overset{(c)}{\leq} &\  \Big|R(s_i,\pi(s_i)) - R^\prime (s_i',\pi(s_i'))\Big|   \notag\\
   +\ &\gamma\Big|\sum_{k=1}^{|\mathcal{S}|} \big(\mathbb{P}(s_k|s_i,\pi(s_i)) V_\text{real}^*(s_k)- \mathbb{P}^\prime(s_k'|s_i',\pi(s_i')) V_\text{DT}^*(s_k')\big)\Big| \notag\\
    &\ +\gamma \Big|\sum_{k=1}^{|\mathcal{S}|} \mathbb{P}(s_k|s_i,\pi(s_i)) (V_\text{real}^\pi(s_k)-  V_\text{real}^*(s_k) )\Big|\notag\\
    &\ + \gamma \Big|\sum_{k=1}^{|\mathcal{S}|} \mathbb{P}^\prime(s_k'|s_i',\pi(s_i')) (V_\text{DT}^*(s_k')- V_\text{DT}^\pi(s_k'))\Big|  \notag\\
    \leq &\ \max_{a} \Big\{\Big|R(s_i,a) - R^\prime (s_i',a)\Big|   \notag\\
    &\ +\gamma\Big|\sum_{k=1}^{|\mathcal{S}|} (\mathbb{P} (s_k|s_i,a) V_\text{real}^*(s_k) - \mathbb{P}^\prime(s_k|s_i',a) V_\text{DT}^*(s_k))\Big|\Big\} \notag\\
    &\ + \gamma \max_{k} \!\Big|\!V_\text{real}^\pi(s_k) \!-\! V_\text{real}^*(s_k)\Big|\! + \!\gamma \max_{k} \!\Big|\!V_\text{DT}^\pi(s_k') - V_\text{DT}^*(s_k')  \Big| \notag\\
    \overset{(d)}{\leq} &\max_{a} \!\Big\{ \big|\!R(s_i,a) \!-\! R^\prime (s_i',a)\big| \! +\!\gamma W_1\big(\mathbb{P}(\cdot|s_i,a), \mathbb{P^\prime}(\cdot|s_i',a);\bar{d}\big)\!\Big\} \notag\\
    &\ + \gamma \Big\|V_\text{real}^*-V_\text{real}^{\pi}\Big\| + \gamma \Big\|V_\text{DT}^*-V_\text{DT}^{\pi}\Big\|\notag\\
    =&\ \bar{d}(s_i,s_i') + \gamma \|V_\text{real}^*-V_\text{real}^{\pi}\| + \gamma \|V_\text{DT}^*-V_\text{DT}^{\pi}\|.
\end{align}
Here, step~$(a)$ stems from the definition of value function in (\ref{eq:v2}), and steps~$(b)$ and $(c)$ follow from the triangle inequality.
Step~$(d)$ is derived from inequality (\ref{geq_ineq}) and the fact that, according to Corollary \ref{bisimineq}, $\big( V_\text{real}^*(s_i)\big)_{i=1}^{|\mathcal{S}|}$ and $\big( V_\text{DT}^*(s_i) \big)_{i=1}^{|\mathcal{S}|}$ form a pair of $|\mathcal{S}|$-length vectors satisfying condition (\ref{LP_cond_1}) with cost function $\bar{d}$. Combining inequalities on all the three summation terms in (\ref{eq33}), we have
\begin{align}
&\ \|V_\textnormal{real}^*-V_\textnormal{real}^{\pi}\|\notag\\
=&\ \max_{i}\left\{ V_\text{real}^*(s_i)-V_\text{real}^\pi(s_i) \right\}  \notag\\
    \leq&\ \underbrace{\max_{i} \bar{d}(s_i,s_i')}_{\text{1st term}}+ \underbrace{\|V_\text{DT}^*-V_\text{DT}^{\pi}\|}_{\text{2nd term}} \notag\\
   &\  + \underbrace{\max_{i}\bar{d}(s_i,s_i') + \gamma \|V_\text{real}^*-V_\text{real}^{\pi}\| + \gamma \|V_\text{DT}^*-V_\text{DT}^{\pi}\|}_{\text{3rd term}}\notag\\
    \leq&\ 2\max_{i}\bar{d}(s_i,s_i')+(1+\gamma)\|V_\text{DT}^*-V_\text{DT}^{\pi}\|+\gamma \|V_\text{real}^*-V_\text{real}^{\pi}\|.
\end{align}
Rearranging the inequality yields the desired result.
\end{proof}
We now derive an upper bound on the sub-optimality for any policy $\pi$ transferred from the DT to the real environment. However, due to the recursive definition of DT-BSM, $\bar{d}$ must be calculated iteratively, which is computationally exhaustive. Specifically, although only $\bar{d}(s_i,s_i')$ is requisite for the performance bound, calculating the approximated DT-BSM $d_{n}$ at each iteration still necessitates $d_{n-1}(s_i,s_j')$ for distinct $i$ and $j$ to compute the Wasserstein distance. In addition, solving the linear program in (\ref{LP0}) has a complexity of $O(|\mathcal{S}|^2\log|\mathcal{S}|)$ \cite{DOBKIN19801}, so each iteration requires $O(|\mathcal{A}||\mathcal{S}|^4\log|\mathcal{S}|)$ operations. Combining with the convergence of DT-BSM provided in Theorem \ref{the:BSM}, in order to approximate $\bar{d}$ within an error of $\delta$, the number of required operations scales as $O(|\mathcal{A}||\mathcal{S}|^4\log|\mathcal{S}|\frac{\ln(\delta(1-\gamma)/R_\text{max})}{\ln \gamma})$. This substantial complexity renders the DT-BSM computationally infeasible for current DRL-based network optimization tasks in practice, which typically involve large state and action spaces. As exemplified in \cite{9372298}, for a conventional downlink power allocation problem in a multi-cell environment, where $B$ is the number of base stations, $U$ is the number of users, and $F$ is the number of sub-bands. The state space of the MDP is given by $|\mathcal{S}|=(B\times U \times (F+1))$. Considering $L$ power levels for optimization, the action space is $|\mathcal{A}|=(B\times L^F)$. This results in prohibitive computational complexity for the calculation of DT-BSM in emerging heterogeneous and densely deployed 5G networks and its beyond.

\subsection{Performance Bound Using Total Variation Distance}
To address the complexity issue, we adopt the total variation distance, defined as
\begin{equation}
    \textnormal{TV}(P,Q) = \frac{1}{2} \sum_{i=1}^{|\mathcal{S}|} \big|P(s_i)-Q(s_i')\big|,
\end{equation}
to formulate a modified version of DT-BSM, denoted by $d_\text{TV}$. Following the structure of $\bar{d}$, we define $d_\text{TV}$ as follows.
\begin{defi}\label{defi:3}
\begin{align}
    d_\textnormal{TV}(s_i,s_i') =& \ \max_{a} \Big\{\Big|R(s_i,a)-R^\prime(s_i',a)\Big|\notag\\
    &\ \ \ \left.+\frac{\gamma R_\textnormal{max}}{1-\gamma} \textnormal{TV}\left(\mathbb{P}(\cdot|s_i,a), \mathbb{P}^\prime(\cdot|s_i',a) \right)\right\}, \label{defi:3_eq}
\end{align}
where $R_\textnormal{max}=\max_{i,j,a} |R(s_i,a)-R^\prime(s_j',a)|$.
\end{defi}
Then we derive the second inequality in Theorem~\ref{the:1} by demonstrating the relationship between $\bar{d}$ and $d_\textnormal{TV}$.
\begin{lemma}\label{lemma:6}
Given $\bar{d}$ and $d_\textnormal{TV}$ defined respectively in (\ref{eq:dn}) and (\ref{defi:3_eq}),
\begin{equation}
    \max_{i}\bar{d}(s_i,s_i') \leq \frac{1}{1-\gamma}\max_{i}d_\textnormal{TV}(s_i,s_i').
\end{equation}
\end{lemma}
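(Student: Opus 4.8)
The plan is to work directly with $\bar d$ through its defining fixed-point equation (\ref{eq:dbar}), together with the range bound $0\le\bar d(s_k,s_l')\le\frac{R_\textnormal{max}}{1-\gamma}$ for all $k,l$ furnished by Theorem~\ref{the:BSM}. Write $D\triangleq\max_i\bar d(s_i,s_i')$; note $D$ is finite. The crux is a crude but well-chosen upper bound on the Wasserstein term $W_1\big(\mathbb{P}(\cdot|s_i,a),\mathbb{P}'(\cdot|s_i',a);\bar d\big)$ appearing in the diagonal instance of (\ref{eq:dbar}), obtained by exhibiting one feasible transport plan and applying inequality (\ref{leq_ineq}).

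Fix $i$ and $a$ and abbreviate $P=\mathbb{P}(\cdot|s_i,a)$, $Q=\mathbb{P}'(\cdot|s_i',a)$. I would take the plan $\boldsymbol\Lambda$ in which each state $k$ keeps $\min\{P(s_k),Q(s_k')\}$ units in place — this mass moves from position $k$ of $P$ to position $k$ of $Q$, i.e.\ between the \emph{same} state of the two MDPs, hence is charged the diagonal cost $\bar d(s_k,s_k')$ — while the leftover mass, of total $\sum_k\big(P(s_k)-Q(s_k')\big)^+=\textnormal{TV}(P,Q)$ on the supply side and $\sum_k\big(Q(s_k')-P(s_k)\big)^+=\textnormal{TV}(P,Q)$ on the demand side, is routed arbitrarily from excess to deficit states. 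The marginal constraints (\ref{LP0_cond_1})--(\ref{LP0_cond_3}) hold by construction. Plugging $\boldsymbol\Lambda$ into (\ref{leq_ineq}), bounding each leftover-transport cost by the diameter $\frac{R_\textnormal{max}}{1-\gamma}$, each diagonal cost $\bar d(s_k,s_k')$ by $D$, and using $\sum_k\min\{P(s_k),Q(s_k')\}=1-\textnormal{TV}(P,Q)$, I get
\[
W_1\big(P,Q;\bar d\big)\le\big(1-\textnormal{TV}(P,Q)\big)D+\textnormal{TV}(P,Q)\frac{R_\textnormal{max}}{1-\gamma}.
\]

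Substituting into the diagonal case of (\ref{eq:dbar}) and using $\gamma\big(1-\textnormal{TV}(P,Q)\big)D\le\gamma D$, with $\gamma D$ independent of $a$ so that it exits the maximum, gives
\[
\bar d(s_i,s_i')\le\gamma D+\max_a\Big\{\big|R(s_i,a)-R'(s_i',a)\big|+\tfrac{\gamma R_\textnormal{max}}{1-\gamma}\textnormal{TV}\big(\mathbb{P}(\cdot|s_i,a),\mathbb{P}'(\cdot|s_i',a)\big)\Big\}=\gamma D+d_\textnormal{TV}(s_i,s_i'),
\]
the last equality being Definition~\ref{defi:3}. Taking the maximum over $i$ yields $D\le\gamma D+\max_i d_\textnormal{TV}(s_i,s_i')$, and rearranging (valid since $\gamma\in(0,1)$ and $D<\infty$) gives the claim.

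The step demanding the most care is the transport-plan construction and the recognition that its cost splits into a stay-in-place part carrying only the diagonal values $\bar d(s_k,s_k')$ and a residual part of total mass $\textnormal{TV}(P,Q)$ chargeable at the diameter $\frac{R_\textnormal{max}}{1-\gamma}$; the remaining manipulations are routine bookkeeping. If one wishes to avoid invoking the self-referential equation (\ref{eq:dbar}) directly, the same estimate can instead be run on the iterates $d_n$ of (\ref{eq:dn}), showing by induction that $\max_i d_n(s_i,s_i')\le\frac{1-\gamma^n}{1-\gamma}\max_i d_\textnormal{TV}(s_i,s_i')$ and then passing to the limit via Theorem~\ref{the:BSM}, which uses the range bound on $d_{n-1}$ in place of that on $\bar d$.
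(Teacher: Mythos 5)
Your proposal is correct and follows essentially the same route as the paper: the identical transportation plan (keep the shared mass $\min\{P(s_k),Q(s_k')\}$ in place at diagonal cost, route the residual mass of total $\textnormal{TV}(P,Q)$ at the diameter cost $\frac{R_\textnormal{max}}{1-\gamma}$), the same resulting bound $W_1(P,Q;\bar d)\le(1-\textnormal{TV})\max_i\bar d(s_i,s_i')+\textnormal{TV}\cdot\frac{R_\textnormal{max}}{1-\gamma}$, and the same final rearrangement of $D\le\gamma D+\max_i d_\textnormal{TV}(s_i,s_i')$. The only cosmetic differences are that you name $D$ explicitly and note the optional detour through the iterates $d_n$; no gaps.
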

\begin{proof}
\begin{figure}[t]
    \centering
    \includegraphics[width=0.8\linewidth]{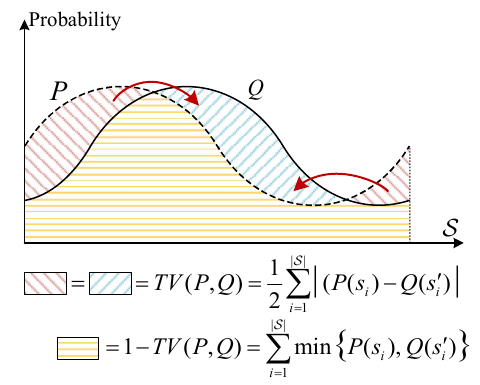}
    \caption{Schematic of the transportation plan.}
    \label{fig:total variation}
\end{figure}
We consider a special transportation plan that moves mass from distribution $P$ to distribution $Q$. This plan preserves the mass shared between $P$ and $Q$, defined as $\min\{P(s_i),Q(s_i')\}$ for each $i$. The remaining mass, where $P(s_i)>Q(s_i')$, is transported to states where $P(s_i)<Q(s_i')$. A schematic example of this plan is shown in Fig.~\ref{fig:total variation}. The total mass to be transported is quantified by the total variation distance, i.e., $\textnormal{TV}(P,Q)$, and the transportation cost with a given cost function $d$ is bounded by $\max_{i,j}{d}(s_i,s_j')$. The shared mass is given by $1-\textnormal{TV}(P,Q)$, with the cost bounded by $\max_{i}{d}(s_i,s_i')$. 

While this plan adheres to the definition of Wasserstein distance, it is not necessarily the optimal one with the minimum transportation cost. Then we have
\begin{align}\label{TVineq}
&\ W_1(P,Q;d)\notag\\
\leq &\ \textnormal{TV}(P,Q)\max_{i,j}d(s_i,s_j') + \big(1-\textnormal{TV}(P,Q)\big)\max_{i}d(s_i,s_i')\notag\\
\overset{(a)}{\leq} &\ \frac{R_\text{max}}{1-\gamma} \textnormal{TV}(P,Q) + \big(1-\textnormal{TV}(P,Q)\big)\max_{i}d(s_i,s_i'),
\end{align}
where step~$(a)$ uses the fact that $d$ is a map $\mathcal{S} \times \mathcal{S} \rightarrow [0,\frac{R_\text{max}}{1-\gamma}]$. Then we obtain
\begin{align}
&\max_{i}\bar{d}(s_i,s_i') \notag\\
\overset{(a)}{=}&\max_{i,a} \!\Big\{ \! \big|R(s_i,\!a)\! - \!R^\prime(s_i',\!a)\big|  \! +\! \gamma W_1\big(\mathbb{P}(\cdot|s_i,\!a), \mathbb{P^\prime}(\cdot|s_i',\!a) ;\bar{d}\big)\! \Big\}  \notag\\
\overset{(b)}{\leq}&\max_{i,a} \Big\{ \!\big|\!R(s_i,\!a)\! - \!R^\prime(s_i',\!a)\!\big|\!  \! +\!  \frac{\gamma R_\text{max}}{1-\gamma}\textnormal{TV}\!\Big(\!\mathbb{P}(\cdot|s_i,\!a), \mathbb{P^\prime}(\cdot|s_i',\!a)\!\Big)\!\notag\\
&\qquad+\gamma\Big(1-\textnormal{TV}\big(\mathbb{P}(\cdot|s_i,\!a), \mathbb{P^\prime}(\cdot|s_i',\!a) \big)\Big)\max_{j}\bar{d}(s_j,s_j')\! \Big\} \notag\\
\leq& \max_{i,a} \!\Big\{ \! \big|\!R(s_i,\!a) \! - \!R^\prime(s_i',\!a)\!\big| \! +\! \frac{\gamma R_\text{max}}{1-\gamma}\textnormal{TV}\big(\mathbb{P}(\cdot|s_i,\!a), \mathbb{P^\prime}(\cdot|s_i',\!a)\!\big)\!\! \Big\} \notag\\
&+\gamma \max_{i,a} \!\Big\{ \! \Big(1-\textnormal{TV}\big(\mathbb{P}(\cdot|s_i,\!a), \mathbb{P^\prime}(\cdot|s_i',\!a) \big)\Big)\max_{j}\bar{d}(s_j,s_j')\! \Big\} \notag\\
\overset{(c)}{\leq}& \max_{i,a} \!\Big\{ \! \big|\!R(s_i,\!a)\! - \!R^\prime(s_i',\!a)\!\big|  \! +\!  \frac{\gamma R_\text{max}}{1-\gamma}\textnormal{TV}\big(\mathbb{P}(\cdot|s_i,\!a), \mathbb{P^\prime}(\cdot|s_i',\!a )\!\big)\!\! \Big\}  \notag\\
&+ \gamma\max_{i}\bar{d}(s_i,s_i').
\end{align}

Here, step~$(a)$ follows from the definition of $\bar{d}$ in (\ref{eq:dbar}), step~$(b)$ is derived from (\ref{TVineq}), and step~$(c)$ is obtained from the non-negativity of the total variation distance.
Rearranging both sides of the above inequality, we have
\begin{align}
\max_{i}\bar{d}(s_i,s_i') \leq&\ \frac{1}{1-\gamma}\max_{i,a} \Big\{\Big|R(s_i,a)-R^\prime(s_i',a)\Big|\notag\\
&\ + \frac{\gamma R_\text{max}}{1-\gamma}\textnormal{TV}\big(\mathbb{P}(\cdot|s_i,a), \mathbb{P^\prime}(\cdot|s_i',a) \big)\Big\}\notag\\
=&\ \frac{1}{1-\gamma}\max_{i}d_\text{TV}(s_i,s_i'),
\end{align}
which established the required inequality.
\end{proof}

Now that Theorem~\ref{the:1} readily follows from Lemma~\ref{lemma:5} and Lemma~\ref{lemma:6}. Compared to the $O(|\mathcal{A}||\mathcal{S}|^4\log|\mathcal{S}|\frac{\ln(\delta(1-\gamma)/R_\text{max})}{\ln \gamma})$ operations required to approximate $\max_{i}\bar{d}(s_i,s_i')$, the total variation distance method requires only $O(|\mathcal{A}||\mathcal{S}|^2)$ operations to compute $\max_{i}d_\text{TV}(s_i,s_i')$, while consistently guaranteeing the performance bound.

\section{Empirical DT-BSM with the Total Variation Distance}\label{sec:emp}
In the previous section, we established a performance bound using $\bar{d}$ and introduced a computationally efficient alternative, $d_\text{TV}$, based on the total variation distance. While $d_\text{TV}$ significantly reduces the computational complexity of calculating DT-BSM, it is still challenging in practice to acquire accurate transition probabilities in both real and DT environments.

A promising approach is statistical sampling. Suppose $P$ and $Q$ are approximated by the empirical distributions $\hat{P}$ and $\hat{Q}$, respectively. To be specific, we collect $K$ independent samples $\{X_1, X_2, \ldots, X_K\}$ from $P$ and define $\hat{P}(x_i)=\frac{1}{K} \sum_{k=1}^K \delta_{X_k}(x_i)$, where $\delta$ denotes the Dirac measure at $X_k$ such that $\delta_{X_k}(x_i)=1$ if $x_i = X_k$ and $0$ otherwise. Similarly, we define $\hat{Q}$. By the strong law of large numbers (SLLN), $\hat{P}$ and $\hat{Q}$ converge almost surely to $P$ and $Q$, respectively. Based on these empirical distributions, we define the empirical total variation distance, $\hat{\textnormal{TV}}(P,Q)$, and the empirical DT-BSM on total variation distance, $\hat{d}_\text{TV}(s_i,s_i')$ as follows.
\begin{defi}\label{defi:4}
    Given empirical distributions $\hat{P}$ and $\hat{Q}$ approximated through $K$ samples, define the empirical total variance by
\begin{equation}
    \hat{\textnormal{TV}}(P,Q) = \frac{1}{2} \sum_{i=1}^{|\mathcal{S}|} |\hat{P}(s_i)-\hat{Q}(s_i')|,
\end{equation}
    and define the empirical DT-BSM on the total variation distance by
\begin{align}\label{defi:4_eq}
    \hat{d}_\textnormal{TV}(s_i,s_i') =  &\max_{a} \Big\{\Big|R(s_i,a)-R^\prime(s_i',a)\Big|\notag\\
    &\ \left.+\frac{\gamma R_\textnormal{max}}{1-\gamma} \hat{\textnormal{TV}}\left(\mathbb{P}(\cdot|s_i,a), \mathbb{P}^\prime(\cdot|s_i',a) \right)\right\}.
\end{align}
\end{defi}
\noindent Then we have the following lemma on the convergence of $\hat{d}_\text{TV}$.
\begin{lemma}\label{lemma:emprical}
$\hat{d}_\textnormal{TV}$ converges to $d_\textnormal{TV}$ almost surely as $K$ gets large.
\end{lemma}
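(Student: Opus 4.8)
The plan is to reduce the almost-sure convergence of $\hat d_\textnormal{TV}$ to $d_\textnormal{TV}$ to the almost-sure convergence of the empirical total variation distance $\hat{\textnormal{TV}}$ to $\textnormal{TV}$, and then establish the latter. First I would fix an action $a$ and a state $s_i$, and observe that $\hat P = \hat{\mathbb P}(\cdot|s_i,a)$ is built from $K$ i.i.d.\ samples drawn from $P = \mathbb P(\cdot|s_i,a)$, so by the strong law of large numbers each coordinate $\hat P(s_k) \to P(s_k)$ almost surely; since $\mathcal S$ is finite, the finite intersection of these almost-sure events still has probability one, hence the whole vector $\hat P$ converges to $P$ almost surely. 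The same holds for $\hat Q = \hat{\mathbb P}'(\cdot|s_i,a)$. Then, by continuity of the map $(P,Q)\mapsto \frac12\sum_{k=1}^{|\mathcal S|}|P(s_k)-Q(s_k')|$ (it is Lipschitz in the $\ell_1$ sense), we get $\hat{\textnormal{TV}}(\mathbb P(\cdot|s_i,a),\mathbb P'(\cdot|s_i',a)) \to \textnormal{TV}(\mathbb P(\cdot|s_i,a),\mathbb P'(\cdot|s_i',a))$ almost surely.

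Next I would propagate this through the definition \eqref{defi:4_eq}. Since $\frac{\gamma R_\textnormal{max}}{1-\gamma}$ is a fixed constant and $|R(s_i,a)-R'(s_i',a)|$ does not depend on the samples, for each $(i,a)$ the quantity inside the braces in \eqref{defi:4_eq} converges almost surely to the corresponding quantity in \eqref{defi:3_eq}. The outer $\max_a$ is over the finite action space $\mathcal A$, and the maximum of finitely many almost-surely-convergent sequences converges almost surely to the maximum of the limits (again using that a finite intersection of probability-one events has probability one, together with continuity of $\max$). This yields $\hat d_\textnormal{TV}(s_i,s_i') \to d_\textnormal{TV}(s_i,s_i')$ almost surely for each $i$; taking the finite intersection over all $i\in\{1,\dots,|\mathcal S|\}$ gives joint almost-sure convergence, which is the claim.

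I do not expect a genuine obstacle here, since every space in sight ($\mathcal S$, $\mathcal A$) is finite, so all the limit operations commute with the finitely many maxima and sums and the only probabilistic input is the elementary SLLN applied coordinatewise. The one point to state carefully is the bookkeeping of null sets: one must emphasize that a \emph{finite} union of null sets is null, which is what makes ``coordinatewise a.s.\ convergence'' upgrade to ``a.s.\ joint convergence of the vectors'' and then to a.s.\ convergence after taking $\max_a$ and $\max_i$; this is routine but worth spelling out explicitly. A secondary, purely quantitative refinement — which the paper presumably pursues in the sequel to determine the required sample size $K$ — is to replace the qualitative SLLN argument with a concentration bound (e.g.\ Hoeffding's inequality applied to $\|\hat P - P\|_1$, or a Bretagnolle--Huber-type tail bound for the empirical total variation), but that is not needed for the almost-sure statement of Lemma~\ref{lemma:emprical} itself.
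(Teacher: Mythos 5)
Your proposal is correct and follows essentially the same route as the paper: the strong law of large numbers applied coordinatewise to the empirical transition probabilities, followed by propagating the error through the total variation distance and the finite $\max_a$ via a Lipschitz/continuity argument (the paper writes this out with explicit $\epsilon$-bounds, culminating in $|d_\textnormal{TV}-\hat d_\textnormal{TV}|\le\epsilon$). Your explicit attention to the null-set bookkeeping over the finitely many coordinates, actions, and states is if anything slightly more careful than the paper's phrasing, and your remark that the quantitative refinement comes from Hoeffding's inequality is exactly what the paper does next.
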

\begin{proof}
By the SLLN, we can choose a sufficiently large $K$ such that for all $i$,
    \begin{equation}\label{lemma6:1}
        \max\Big\{\big|\hat{P}(s_i)-P(s_i)\big|,\big|\hat{Q}(s_i')-Q(s_i')\big|\Big\}\leq\frac{\epsilon(1-\gamma)}{\gamma R_\text{max}|\mathcal{S}|},
    \end{equation}
where $\epsilon>0$. Then
    \begin{align}
        &\ \big|\textnormal{TV}(P,Q)-\hat{\textnormal{TV}}(P,Q)\big|\notag\\
        = &\ \Big|\frac{1}{2} \sum_{i=1}^{|\mathcal{S}|} \big|P(s_i)-Q(s_i')\big|-\frac{1}{2} \sum_{i=1}^{|\mathcal{S}|} \big|(\hat{P}(s_i)-\hat{Q}(s_i')\big|\Big|\notag\\
        \leq &\ \frac{1}{2} \sum_{i=1}^{|\mathcal{S}|} \Big|\big|P(s_i)-Q(s_i')\big|- \big|(\hat{P}(s_i)-\hat{Q}(s_i')\big|\Big|\notag\\
        \leq &\ \frac{1}{2} \sum_{i=1}^{|\mathcal{S}|} \Big|\big(P(s_i)-Q(s_i')\big)- \big((\hat{P}(s_i)-\hat{Q}(s_i')\big)\Big|\notag\\
        \leq &\ \frac{1}{2} \sum_{i=1}^{|\mathcal{S}|} \Big|\big|P(s_i)-\hat{P}(s_i)\big| + \big|(\hat{Q}(s_i')-Q(s_i')\big|\Big|\notag\\
       \overset{(a)}{\leq} &\ \frac{|\mathcal{S}|}{2} \cdot \Big|\frac{\epsilon(1-\gamma)}{\gamma R_\text{max}|\mathcal{S}|} +  \frac{\epsilon(1-\gamma)}{\gamma R_\text{max}|\mathcal{S}|}\Big| =\frac{\epsilon(1-\gamma)}{\gamma R_\text{max}},\label{EmpiriTVineq}
    \end{align}
where step~$(a)$ follows from (\ref{lemma6:1}). Next,
\begin{align}
        &\ \big|d_\text{TV}\left(s_i, s_i'\right) - \hat{d}_\text{TV}\left(s_i, s_i'\right)\big|\notag\\
        \overset{(a)}{=} &\ \Big|\!\max_{a}\! \Big\{\!\big|\!R(s_i,a)\!-\!R^\prime(s_i',a)\!\big|\!\! +\!\!\frac{\gamma\! R_\text{max}}{1-\gamma} \textnormal{TV}\!\big(\mathbb{P}(\cdot|s_i,a), \mathbb{P}^\prime(\cdot|s_i',a) \!\big)\!\!\Big\}\notag\\
        &\ -\max_{a} \Big\{\big|R(s_i,a)-R^\prime(s_i',a)\big|\notag\\
        &\ \qquad\quad\ \ +\frac{\gamma R_\text{max}}{1-\gamma} \hat{\textnormal{TV}}\big(\mathbb{P}(\cdot|s_i,a), \mathbb{P}^\prime(\cdot|s_i',a) \big)\Big\}\Big|\notag\\
        \leq &\ \max_{a}\Big\{\Big|\Big(\big|R(s_i,a)-R^\prime(s_i',a)\big|\notag\\
        &\qquad\quad\ +\frac{\gamma R_\text{max}}{1-\gamma} \textnormal{TV}\big(\mathbb{P}(\cdot|s_i,a), \mathbb{P}^\prime(\cdot|s_i',a) \big)\Big)\notag\\
        &\qquad\quad\ -\Big(\big|R(s_i,a)-R^\prime(s_i',a)\big|\notag\\
        &\qquad\quad\ +\frac{\gamma R_\text{max}}{1-\gamma} \hat{\textnormal{TV}}\big(\mathbb{P}(\cdot|s_i,a), \mathbb{P}^\prime(\cdot|s_i',a) \big)\Big)\Big|\Big\}\notag\\
         =&\  \frac{\gamma R_\text{max}}{1-\gamma} \max_{a} \Big\{\Big|\textnormal{TV}\big(\mathbb{P}(\cdot|s_i,a), \mathbb{P}^\prime(\cdot|s_i',a)\big)\notag\\
        &\qquad\qquad\qquad-\hat{\textnormal{TV}}\big(\mathbb{P}(\cdot|s_i,a), \mathbb{P}^\prime(\cdot|s_i',a)\big)\Big|\Big\}\notag\\
        \overset{(b)}{\leq} &\  \frac{\gamma R_\text{max}}{1-\gamma}\cdot \frac{\epsilon(1-\gamma)}{\gamma R_\text{max}} = \epsilon.
\end{align}
Here, step~$(a)$ follows from the definitions in (\ref{defi:3_eq}) and (\ref{defi:4_eq}), and step~$(b)$ uses the result in (\ref{EmpiriTVineq}).
\end{proof}
The required number of samples for approximation can be derived through the Hoeffding's inequality \cite{hoeffding1994probability}. Without loss of generality, consider a transition start at state $s_i$ with action $a$. By recording the next state upon taking action $a$ at state $s$ for a total of $K$ times, we obtain $K$ samples. For each potential next state $s_k\in\mathcal{S}$, this process can be modeled as Bernoulli trials with a success probability given by $\mathbb{P}(s_k|s_i,a)$. The empirical probability $\hat{\mathbb{P}}(s_k|s_i,a)$ is defined as the ratio of the number of successful outcomes (i.e., the transitions into $s_k$) to the totally $K$ trials. According to the Hoeffding's inequality, we have
\begin{equation}
    \operatorname{Pr}\Big\{\big|\hat{\mathbb{P}}(s_k|s_i,a)-\mathbb{P}(s_k|s_i,a)\big|\geq\epsilon\Big\} \leq 2e^{-2K\epsilon^2}
\end{equation}
for each $s_k$, where $\operatorname{Pr}\{\cdot\}$ represents the probabiltiy of an event. To achieve a confidence level of $1-\alpha$ for $\alpha\in[0,1)$, we require
\begin{align}
    2e^{-2K\epsilon^2}\leq\alpha\ \Longrightarrow\ K\geq \frac{-\ln(\alpha/2)}{2\epsilon^2}.\label{hoeffding}
\end{align}
This result applies to the transition probabilities in both DT MDP and the real MDP. Combining (\ref{hoeffding}) with (\ref{lemma6:1}) in Lemma~\ref{lemma:emprical}, we determine the required sample size for the empirical $d_\text{TV}$.
\begin{theorem}
    For each state-action pair in both DT MDP and the real MDP, collect $K$ samples such that
    \begin{equation}
        K\geq -\ln(\alpha/2) \frac{\gamma^2R_\textnormal{max}^2|\mathcal{S}|^2}{2\epsilon^2(1-\gamma)^2}.
    \end{equation}
    Then, with confidence level $1-\alpha$, we have 
    \begin{equation}
        |d_\textnormal{TV}(s_i,s_i')-\hat{d}_\textnormal{TV}(s_i,s_i')| \leq \epsilon,
    \end{equation}
    where $d_\textnormal{TV}(s_i,s_i')$ and $\hat{d}_\textnormal{TV}$ represent the theoretical and empirical DT-BSM based on the total variation distance, as defined in (\ref{defi:3_eq}) and (\ref{defi:4_eq}), respectively.
\end{theorem}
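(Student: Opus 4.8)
The plan is to chain together the deterministic error analysis carried out in the proof of Lemma~\ref{lemma:emprical} with the concentration estimate of Hoeffding's inequality already recorded in (\ref{hoeffding}). The key observation is that the proof of Lemma~\ref{lemma:emprical} in fact establishes a quantitative implication: for fixed $s_i$, $s_i'$, $a$, if the empirical transition probabilities satisfy
\[
\max\big\{|\hat{P}(s_i)-P(s_i)|,\ |\hat{Q}(s_i')-Q(s_i')|\big\}\leq \frac{\epsilon(1-\gamma)}{\gamma R_\text{max}|\mathcal{S}|}\ \text{ for all }i,
\]
with $P=\mathbb{P}(\cdot|s_i,a)$ and $Q=\mathbb{P}'(\cdot|s_i',a)$, then (via (\ref{EmpiriTVineq})) $|d_\text{TV}(s_i,s_i')-\hat{d}_\text{TV}(s_i,s_i')|\leq\epsilon$. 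Thus the task reduces to choosing the sample size $K$ so that this pointwise accuracy condition on the empirical distributions holds with probability at least $1-\alpha$.

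First I would set the target per-coordinate accuracy to $\epsilon':=\frac{\epsilon(1-\gamma)}{\gamma R_\text{max}|\mathcal{S}|}$ and apply Hoeffding's inequality to the Bernoulli estimator $\hat{\mathbb{P}}(s_k|s_i,a)$ of $\mathbb{P}(s_k|s_i,a)$, which gives $\Pr\{|\hat{\mathbb{P}}(s_k|s_i,a)-\mathbb{P}(s_k|s_i,a)|\geq\epsilon'\}\leq 2e^{-2K\epsilon'^2}$. Imposing $2e^{-2K\epsilon'^2}\leq\alpha$ exactly as in (\ref{hoeffding}) yields $K\geq\frac{-\ln(\alpha/2)}{2\epsilon'^2}$, and substituting the value of $\epsilon'$ produces precisely the claimed threshold $K\geq-\ln(\alpha/2)\frac{\gamma^2R_\text{max}^2|\mathcal{S}|^2}{2\epsilon^2(1-\gamma)^2}$. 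Since the identical argument applies to the transition probabilities of the DT MDP and the real MDP, collecting this many samples for each state--action pair in both MDPs suffices, and the conclusion is then just read off from the deterministic implication above.

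The point requiring care — and what I would flag as the principal subtlety — is the passage from a single-coordinate, single-distribution concentration bound to the event ``(\ref{lemma6:1}) holds for all $i$ and for both $\hat{P}$ and $\hat{Q}$'' demanded by Lemma~\ref{lemma:emprical}. A fully rigorous treatment would union-bound over the $|\mathcal{S}|$ coordinates and the two distributions, inflating the failure probability by a factor of order $|\mathcal{S}|$, so that the honest constant reads $K\geq\frac{\ln(c|\mathcal{S}|/\alpha)}{2\epsilon'^2}$ for a small $c$; the statement as written implicitly interprets the confidence level $1-\alpha$ per estimated entry (equivalently, absorbs the union bound by rescaling $\alpha$), matching the derivation that led to (\ref{hoeffding}). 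Apart from this bookkeeping about quantifiers, the proof is a direct composition of Lemma~\ref{lemma:emprical} with Hoeffding's inequality and involves no further estimates.
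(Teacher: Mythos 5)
Your proof is exactly the paper's argument: substitute the per-coordinate accuracy $\epsilon'=\epsilon(1-\gamma)/(\gamma R_\textnormal{max}|\mathcal{S}|)$ from (\ref{lemma6:1}) into the Hoeffding threshold (\ref{hoeffding}), and then read off the conclusion from the deterministic chain of inequalities already established in the proof of Lemma~\ref{lemma:emprical}. The union-bound caveat you flag is a fair one and applies equally to the paper's own derivation, which likewise applies Hoeffding to a single coordinate of a single distribution and does not inflate $\alpha$ by the factor of order $|\mathcal{S}||\mathcal{A}|$ required for the simultaneous event; your reading of the stated confidence level as per estimated entry is the correct account of what is actually proved.
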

This result enables the calculation of a sufficiently accurate DT-BSM through sampling in both DT and real environments. Notably, the sampling process is conducted for each state-action pair, independent of the policy being followed. As a result, sampling in the real-world network can be carried out prior to the deployment of DRL policies, ensuring the reliable performance of physical networks.

\section{Numerical Experiments}\label{sec:exp}

In this section, we validate the derived performance bounds through a specific network optimization task, that is, admission control in sliced wireless networks \cite{van2019optimal}.

We consider a typical 5G wireless network with three distinct slices, each catering to one of the typical services, i.e., enhanced mobile broadband (eMBB), massive machine-type communications (mMTC), and ultra-reliable and low latency communications (URLLC) \cite{3GPP.21.915}. The network involves three resource types, including radio, computing, and storage, with each slice having unique resource requirements. For example, eMBB services primarily rely on radio resources, while URLLC services have a higher demand for computing resources.

Service requests arise from end users continuously in the wireless network, where the admission control policy evaluates the current request and available resources to determine whether some requests can be admitted and their priority. Requests not admitted remain in the queue until either accepted or expired due to a timeout. The request arrival process follows a Poisson distribution, and the service time (network resource occupation time) follows an exponential distribution \cite{van2019optimal}.

In terms of the associated MDP, the state is defined by the number of requests in the queue and the number of ongoing services in the network. The action is a 3-dimensional vector, representing the number of service requests to be admitted for each slice. The transition probability is influenced by the mean arrival rate of the Poisson distribution and the mean service time of the exponential distribution. The reward is defined as the expected profit from accepting service requests. The proximal policy optimization (PPO) algorithm is utilized to develop the optimal admission control policy. 

In the experiment, we establish both a real environment with accurate parameter setting and a DT environment that includes potential estimation errors in the arrival/service processes (transition probability) and the profit associated with each request (reward). For each test, the DRL agent is trained in the DT environment and then deployed in the real network to validate its performance. Since the optimal policy is not directly attainable, the sub-optimality in deployment is assessed by the difference in long-term average reward between the transferred policy and the policy directly trained in the real environment. In addition, according to Theorem~\ref{the:1},
\begin{align}
    &\|V_\text{real}^*-V_\text{real}^{\pi}\| \leq\  \frac{2}{(1-\gamma)^2}\max_{i,a} \Big\{\Big|R(s_i,a)-R^\prime(s_i',a)\Big| \notag\\
    &+\frac{\gamma R_\text{max}}{1-\gamma} \textnormal{TV}\left(\mathbb{P}(\cdot|s_i,a), \mathbb{P}^\prime(\cdot|s_i',a) \right)\Big\}+\frac{1+\gamma}{1-\gamma}\|V_\text{DT}^*-V_\text{DT}^{\pi}\|,
\end{align}
where $\frac{1+\gamma}{1-\gamma}\|V_\text{DT}^*-V_\text{DT}^{\pi}\|$ reflects the sub-optimality caused by insufficient training in the DT environment. We conduct thorough DRL training in the DT environment to minimize this influence and denote it as $\Delta$ in the following.

\begin{figure}[t]
    \centering
    \includegraphics[width=0.822\linewidth]{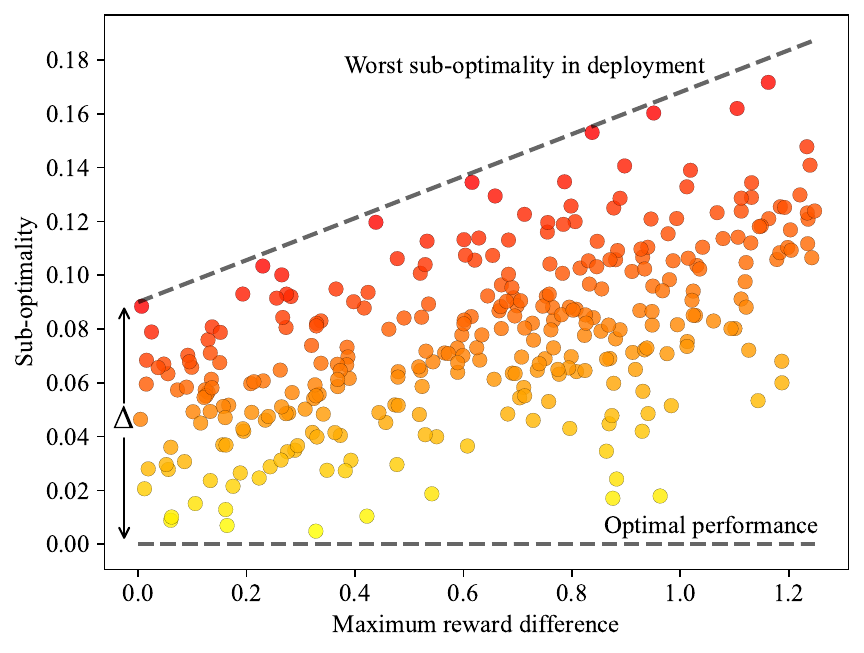}
    \caption{Sub-optimality of transferred policy versus reward difference}
    \label{fig:rwd}
\end{figure}

First, we maintain consistent arrival and service processes in both environments, focusing on the error in profit estimation which introduces a discrepancy in the reward function of the two MDPs. By aggregating the constants in the inequality, the performance bound becomes
\begin{equation}
    \|V_\text{real}^*-V_\text{real}^{\pi}\|\leq \beta_1 \max_{i,a} \big|R(s_i,a)-R^\prime(s_i',a)\big| + \Delta,
\end{equation}
where $\beta_1$ is a constant. We conduct hundreds of training-deployment tests with varying reward function errors in DT and compare the sub-optimality of the performance of transferred policy in the real environment. As shown in Fig.~\ref{fig:rwd}, the worst sub-optimality of the deployment performance exhibits an approximately linear relationship with the reward function discrepancy, which aligns with our theoretical analysis.

\begin{figure}[t]
    \centering
    \includegraphics[width=0.822\linewidth]{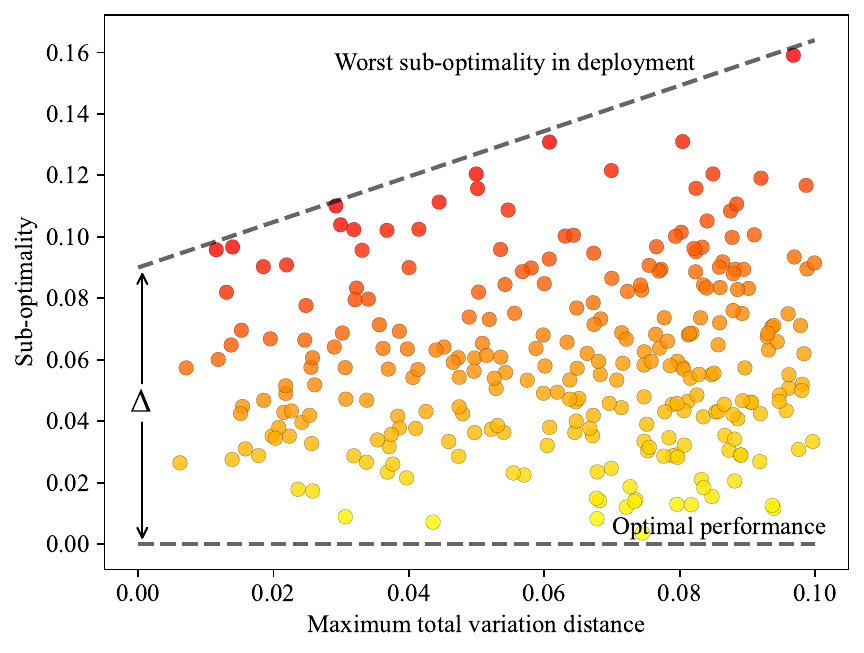}
    \caption{Sub-optimality of transferred policy versus total variation distance of transition probabilities}
    \label{fig:Prob}
\end{figure}

Next, we consider the error in the mean estimation of the arrival and service processes, which leads to a discrepancy in the transition probabilities. Detached from the impact of rewards, the performance bound is given by
\begin{equation}
    \|V_\text{real}^*-V_\text{real}^{\pi}\|\leq \beta_2 \max_{i,a} \textnormal{TV}\big(\mathbb{P}(\cdot|s_i,a), \mathbb{P}^\prime(\cdot|s_i',a)\big) + \Delta,
\end{equation}
where $\beta_2$ is a constant. Due to the large state space, we select a few representative states from both MDPs and compute the maximum total variation distance. Similar experiments are conducted for the DT environments with varying transition probabilities, and the results are presented in Fig.~\ref{fig:Prob}. Consistently, the worst sub-optimality of the performance of the deployed policy shows a nearly linear relationship with the total variation distance in the transition probabilities.

The numerical results demonstrate that, for a DT-driven DRL-based network optimization task, the worst-case deployment performance of a fully DT-trained policy approaches the optimal almost linearly as the discrepancy between the DT MDP and the real MDP decreases, both in terms of transition probabilities and reward functions. Therefore, the proposed DT-BSM provides a quantitative measure to evaluate the DT's quality for DRL training, thus ensuring reliable deployment performance of DT-trained policies.

\section{Conslusion}\label{sec:con}

In this paper, we propose a novel metric, termed DT-BSM, for DT-driven DRL in wireless network optimization. This metric evaluates the DT's quality for reliable DRL training by measuring the discrepancy between the DT MDP and the real MDP. We prove that for any policy learned within the DT MDP (regardless of whether it is DRL-based), the sub-optimality of its deployment performance in the real MDP is bounded by a weighted sum of the DT-BSM and its sub-optimality in the DT MDP. To enhance applicability in large-scale wireless networks, the modified DT-BSM is introduced using the total variation distance, which exponentially reduces the computational complexity from $O(|\mathcal{A}||\mathcal{S}|^4\log|\mathcal{S}|\frac{\ln(\delta(1-\gamma)/R_\text{max})}{\ln \gamma})$ to $O(|\mathcal{A}||\mathcal{S}|^2)$. Furthermore, for real-world scenarios where accurate transition probabilities of the MDP are hardly accessible, we propose an empirical DT-BSM derived through statistical sampling. We prove the convergence from the empirical DT-BSM to the theoretical one and quantitatively establish the relationship between the required sample size and the target level of approximation accuracy. Numerical experiments on a typical admission control task corroborate the theoretical findings. To the best of our knowledge, this is the first metric to provide provable and calculable performance bounds for DT-driven DRL.

\appendices

\section*{Appendix~A\\The continuity of $F_{W_1}$}
We follow the definition of continuity defined in Lemma~\ref{lemma:fixedpoint}. Regard $F_{W_1}\left(s_i, s_j';d\right)$ as a function of $d$. Without loss of generality, fix probability distributions $\{\mathbb{P}(\cdot|s_i,a), \mathbb{P}^\prime(\cdot|s_j',a)\}$ as $\{P,Q\}$ for brevity, and let $\rho\leq \rho^\prime$, $\{\rho,\rho^\prime\}\in\mathcal{M} $. Consider an arbitrary optimal solution $\{\boldsymbol{\mu},\boldsymbol{\nu}\}$ for $W_1(P, Q;\rho)$ in the dual linear program in (\ref{LP}), we have
\begin{equation}
 \mu_i-\nu_j\leq \rho(s_i,s_j')\leq\rho'(s_i,s_j'),\ \forall\;i,j,
\end{equation}
 which is derived from the condition in (\ref{LP_cond_1}) and the pointwise ordering in $\mathcal{M}$.
Here, $\{\boldsymbol{\mu},\boldsymbol{\nu}\}$ is a pair of vectors satisfying condition (\ref{LP_cond_1}) for $W_1(P, Q;\rho')$, and according to inequality (\ref{geq_ineq}),
\begin{align}\label{lemma2ieq1}
\ W_1(P, Q;\rho)= &\ \sum_{i=1}^{|\mathcal{S}|} \mu_i P(s_i)-\nu_i Q(s_i')\notag\\
            \leq &\ W_1(P, Q;\rho^\prime).
\end{align}
By such a monotonicity, we have $\sqcup_{n \in \mathbb{N}} \{W_1(P,Q;\rho_n)\}\leq W_1(P,Q;\sqcup_{n \in \mathbb{N}} \{\rho_n\})$ for any increasing sequence $\{\rho_n\}$ on $\mathcal{M}$.  

We use the original linear program for the other side. Let $\boldsymbol{\Lambda}^n$ denote the optimal solution in (\ref{LP0}) for $W_1(P,Q;\rho_n)$ with elements $\lambda^n_{i,j}$, which inherently satisfies conditions (\ref{LP0_cond_1})-(\ref{LP0_cond_3}) for $W_1(P, Q;\sqcup_{n \in \mathbb{N}} \{\rho_n\})$. Define $\epsilon^n_{i,j} = \sqcup_{n \in \mathbb{N}} \{\rho_n\}(s_i,s_j')-\rho_n(s_i,s_j')$, then $\epsilon^n_{i,j} \geq 0 $ and $\lim_{n\rightarrow \infty}\epsilon^n_{i,j} = 0$ due to the monotonicity of the increasing sequence. Then, we have
\begin{align}\label{lemma2ieq2}
&\ W_1(P,Q;\sqcup_{n \in \mathbb{N}}\{\rho_n\})\notag\\
\overset{(a)}{\leq}&\ \sum_{i,j=1}^{|\mathcal{S}|}\lambda^n_{i,j} \cdot \sqcup_{n \in \mathbb{N}}\{\rho_n\}(s_i,s_j') \notag\\
\overset{(b)}{=}&\  \sum_{i,j=1}^{|\mathcal{S}|} \lambda^n_{i,j} \rho_n(s_i,s_j')+\sum_{i,j=1}^{|\mathcal{S}|} \lambda^n_{i,j} \epsilon^{n}_{i,j}\notag\\
\overset{(c)}{=}&\ W_1(P,Q;\rho_n) + \sum_{i,j=1}^{|\mathcal{S}|} \lambda^n_{i,j} \epsilon^{n}_{i,j}\notag\\
\overset{(d)}{\leq}&\ \sqcup_{n \in \mathbb{N}} \{W_1(P,Q;\rho_n)\} +\sum_{i,j=1}^{|\mathcal{S}|} \min\{P(s_i),Q(s_j')\} \mathbf{\epsilon}^{n}_{i,j}.
\end{align}
Here, step~$(a)$ utilizes inequality (\ref{leq_ineq}), step~$(b)$ applies the definition of $\epsilon^n_{i,j}$, step~$(c)$ applies the definition of $W_1$ in (\ref{LP0}), and the second term in step~$(d)$ is based on conditions (\ref{LP0_cond_1}) and (\ref{LP0_cond_2}).
Taking $n\rightarrow \infty$, we have $\sqcup_{n \in \mathbb{N}} \{W_1(P,Q;\rho_n)\}\geq W_1(P,Q;\sqcup_{n \in \mathbb{N}} \{\rho_n\})$. Following from the two inequalities, we have $\sqcup_{n \in \mathbb{N}} \{W_1(P,Q;\rho_n)\}= W_1(P,Q;\sqcup_{n \in \mathbb{N}} \{\rho_n\})$, and thus for any $i$ and $j$,
\begin{align}
&\ F_{W_1}\big(s_i, s_j';\sqcup_{n \in \mathbb{N}}\{\rho_n\}\big) \notag\\
=&\ W_1\big(\mathbb{P}(\cdot|s_i,a), \mathbb{P}^\prime(\cdot|s_j',a);\sqcup_{n \in \mathbb{N}}\{\rho_n\}\big)\notag\\
=&\ \sqcup_{n \in \mathbb{N}} \big\{W_1(\mathbb{P}(\cdot|s_i,a), \mathbb{P}^\prime(\cdot|s_j',a);\rho_n)\big\}\notag\\
=&\ \sqcup_{n \in \mathbb{N}}\big\{ F_{W_1}\left(s_i, s_j';\rho_n\right)\big\}.
\end{align}
Now that the continuity of $F_{W_1}$ on $\mathcal{M}$ is established.

\section*{Appendix~B\\The Convergence of DT-BSM}
Due to the continuity of the recursive function in (\ref{eq:dn}) and using the induction starting from $d_0\leq d_1$, $\{d_n\}$ forms an increasing sequence on $\mathcal{M}$. Given that $\bar{d}=\sqcup_{n \in \mathbb{N}} F^{(n)}(d_0)$, we have $\bar{d}\geq d_n$ for any $n$. Then we begin with a simple inequality for the Wasserstein distance before proving the convergence of DT-BSM. Let $\boldsymbol{\Lambda}$ with elements $\lambda_{i,j}$ denote the optimal solution for $W_1(P,Q;\bar{d})$, then for any $d_n$
\begin{align}\label{appenBineq}
        &\ W_1\big(P,Q;\bar{d}\big)\notag\\
        = &\  \sum_{i,j}^{|\mathcal{S}|} \lambda_{i,j} \bar{d}(s_i, s_j')\notag\\
        =&\  \sum_{i,j}^{|\mathcal{S}|} \lambda_{i,j} \big(\bar{d}(s_i, s_j')-d_n(s_i, s_j')+d_n(s_i, s_j') \big) \notag\\
        \leq&\  \max_{i,j} \big\{\bar{d}(s_i, s_j')-d_n(s_i, s_j')\big\}+ W_1\big(P,Q;d_n\big).
\end{align}
In the last inequality, the first summation term is based on conditions (\ref{LP0_cond_1})-(\ref{LP0_cond_3}) and $\bar{d}\geq d_n$, while the second term follows from inequality (\ref{leq_ineq}) and the fact that $\boldsymbol{\Lambda}$ satisfies conditions (\ref{LP0_cond_1})-(\ref{LP0_cond_3}) for $W_1\left(P,Q ;d_n\right)$. 

Now we employ the mathematical induction. For the base case, for any $i$ and $j$, we have
\begin{align}\label{AppenB:eq1}
    \ &\bar{d}(s_i, s_j')-d_1(s_i, s_j') \notag\\
    =\ & \max_{a} \!\Big\{\!\big| R(s_i,a)\!-\!R'(s_j',a) \big|\! +\!\gamma W_1\big(\mathbb{P}(\cdot|s_i,a), \mathbb{P}'(\cdot|s_j',a) ;\bar{d}\big)\!\Big\}\notag\\
    \ &-\max_{a} \Big\{\big|R(s_i,a)-R'(s_j',a)\big|\Big\}\notag\\
    \leq\ & \max_{a} \Big\{\big|R(s_i,a)-R'(s_j',a)\big|\Big\}\notag\\
    \ & + \gamma \max_{a} \Big\{ W_1\big(\mathbb{P}(\cdot|s_i,a), \mathbb{P}'(\cdot|s_j',a) ;\bar{d}\big)\Big\}\notag\\
    \ &-\max_{a} \Big\{\big|R(s_i,a)-R'(s_j',a)\big|\Big\}\notag\\
    =\  & \gamma \max_{a} \Big\{W_1\big(\mathbb{P}(\cdot|s_i,a), \mathbb{P}'(\cdot|s_j',a) ;\bar{d}\big)\Big\}\notag\\
    \leq\  & \gamma \max_{i,j} \{\bar{d}(s_i,s_j')\} = \frac{\gamma R_\text{max}}{1-\gamma}.
\end{align}
By the induction hypothesis, we assume that for any $i$, $j$, and an arbitrary $n$,
\begin{align}\label{AppenB:eq2}
    &\bar{d}(s_i, s_j')-d_n(s_i, s_j') \leq \frac{\gamma^n R_\text{max}}{1-\gamma}.
\end{align}
Then for any $i$ and $j$, we have

\begin{align}\label{AppenB:eq3}
    &\ \bar{d}(s_i, s_j')-d_{n+1}(s_i, s_j') \notag\\
    \overset{(a)}{=}\ & \max_{a}\! \Big\{ \!\big|\!R(s_i,\!a)\!-\!R'(s_j',\!a)\!\big| \!+\!\gamma W_1\big(\mathbb{P}(\cdot|s_i,\!a), \mathbb{P}'(\cdot|s_j',\!a) ;\bar{d}\big) \!\Big\}\notag\\
    \ &-\max_{a}\! \Big\{ \!\big|\!R(s_i,\!a)\!-\!R'(s_j',\!a)\!\big| \!+\!\gamma W_1\!\big(\mathbb{P}(\!\cdot|s_i,\!a), \mathbb{P}'\!(\!\cdot|s_j',\!a) ;\!d_n\!\big) \!\Big\}\notag\\
    \overset{(b)}{\leq}\ & \max_{a} \!\Big\{ \!\Big(\!\big|\!R(s_i,\!a)\!-\!R'(s_j',\!a)\!\big| \!+\!\gamma W_1\big(\mathbb{P}(\cdot|s_i,\!a), \mathbb{P}'(\cdot|s_j',\!a) ;\bar{d}\big)\!\Big)\notag\\
    \ &\quad- \!\Big(\!\big|\!R(s_i,\!a)\!-\!R'(s_j',\!a)\!\big| \!+\!\gamma W_1\big(\mathbb{P}(\cdot|s_i,\!a), \mathbb{P}'\!(\!\cdot|s_j',\!a) ;d_n\!\big)\!\Big) \!\Big\}\notag\\
    = &\  \gamma \max_{a} \Big\{W_1\big(\mathbb{P}(\cdot|s_i,a), \mathbb{P}'(\cdot|s_j',a) ;\bar{d}\big)  \notag\\
    &\ \qquad\quad - W_1\big(\mathbb{P}(\cdot|s_i,a), \mathbb{P}'(\cdot|s_j',a) ;d_n\big)\Big\}\notag\\
    \overset{(c)}{\leq }&\  \gamma \max_{a} \Big\{ \max_{i,j} \big\{\bar{d}(s_i, s_j')-d_n(s_i, s_j')\big\} \notag\\
    &\ \qquad\quad+W_1\big(\mathbb{P}(\cdot|s_i,a), \mathbb{P}'(\cdot|s_j',a) ;d_n\big)\notag\\
    &\ \qquad\quad- W_1\big(\mathbb{P}(\cdot|s_i,a), \mathbb{P}'(\cdot|s_j',a) ;d_n\big)\Big\}\notag\\
    = &\  \gamma \max_{i,j} \big\{\bar{d}(s_i, s_j')-d_n(s_i, s_j')\big\} \leq \frac{\gamma^{n+1} R_\text{max}}{1-\gamma}.
\end{align}
Here, step~$(a)$ follows the definitions of $\bar{d}$ and $d_n$, step~$(b)$ is derived from $\bar{d}\geq d_{n+1}$, and step~$(c)$ follows from the result in (\ref{appenBineq}). Following from (\ref{AppenB:eq1})-(\ref{AppenB:eq3}), $\bar{d}(s_i, s_j')-d_n(s_i, s_j')\leq\frac{\gamma^n R_\text{max}}{1-\gamma}$ holds for all $n\in\mathbb{N}$. 

\ifCLASSOPTIONcaptionsoff
  \newpage
\fi
\footnotesize
\bibliographystyle{IEEEtran}

\bibliography{IEEEabrv,IEEEexample}

\begin{thebibliography}{10}
\providecommand{\url}[1]{#1}
\csname url@samestyle\endcsname
\providecommand{\newblock}{\relax}
\providecommand{\bibinfo}[2]{#2}
\providecommand{\BIBentrySTDinterwordspacing}{\spaceskip=0pt\relax}
\providecommand{\BIBentryALTinterwordstretchfactor}{4}
\providecommand{\BIBentryALTinterwordspacing}{\spaceskip=\fontdimen2\font plus
\BIBentryALTinterwordstretchfactor\fontdimen3\font minus \fontdimen4\font\relax}
\providecommand{\BIBforeignlanguage}[2]{{%
\expandafter\ifx\csname l@#1\endcsname\relax
\typeout{** WARNING: IEEEtran.bst: No hyphenation pattern has been}%
\typeout{** loaded for the language `#1'. Using the pattern for}%
\typeout{** the default language instead.}%
\else
\language=\csname l@#1\endcsname
\fi
#2}}
\providecommand{\BIBdecl}{\relax}
\BIBdecl

\bibitem{10183795}
X.~You \emph{et~al.}, ``Toward {6G} $\text{TK}\mu$ extreme connectivity: Architecture, key technologies and experiments,'' \emph{{IEEE} Wireless Commun.}, vol.~30, no.~3, pp. 86--95, June 2023.

\bibitem{van2019optimal}
N.~Van~Huynh, D.~T. Hoang, D.~N. Nguyen, and E.~Dutkiewicz, ``Optimal and fast real-time resource slicing with deep dueling neural networks,'' \emph{{IEEE} J. Sel. Areas Commun.}, vol.~37, no.~6, pp. 1455--1470, Mar. 2019.

\bibitem{10552627}
G.~Wang, P.~Cheng, Z.~Chen, B.~Vucetic, and Y.~Li, ``Green cell-free massive {MIMO}: An optimization embedded deep reinforcement learning approach,'' \emph{{IEEE} Trans. Signal Process.}, vol.~72, pp. 2751--2766, June 2024.

\bibitem{9767557}
Z.~Hajiakhondi-Meybodi, A.~Mohammadi, M.~Hou, and K.~N. Plataniotis, ``{DQLEL}: Deep {Q}-learning for energy-optimized {LoS}/{NLoS} {UWB} node selection,'' \emph{{IEEE} Trans. Signal Process.}, vol.~70, pp. 2532--2547, May 2022.

\bibitem{10024766}
W.~Xu, Z.~Yang, D.~W.~K. Ng, M.~Levorato, Y.~C. Eldar, and M.~Debbah, ``Edge learning for {B5G} networks with distributed signal processing: Semantic communication, edge computing, and wireless sensing,'' \emph{{IEEE} J. Sel. Topics Signal Process.}, vol.~17, no.~1, pp. 9--39, Jan. 2023.

\bibitem{9372298}
A.~Feriani and E.~Hossain, ``Single and multi-agent deep reinforcement learning for {AI}-enabled wireless networks: A tutorial,'' \emph{{IEEE} Commun. Surveys Tuts.}, vol.~23, no.~2, pp. 1226--1252, Mar. 2021.

\bibitem{9854866}
L.~U. Khan, Z.~Han, W.~Saad, E.~Hossain, M.~Guizani, and C.~S. Hong, ``Digital twin of wireless systems: Overview, taxonomy, challenges, and opportunities,'' \emph{{IEEE} Commun. Surveys Tuts.}, vol.~24, no.~4, pp. 2230--2254, Aug. 2022.

\bibitem{tao2023wireless}
Z.~Tao, W.~Xu, Y.~Huang, X.~Wang, and X.~You, ``Wireless network digital twin for {6G}: Generative {AI} as a key enabler,'' \emph{{IEEE} Wireless Commun.}, vol.~31, no.~4, pp. 24--31, Aug. 2024.

\bibitem{10623528}
N.~Cheng, X.~Wang, Z.~Li, Z.~Yin, T.~Luan, and X.~S. Shen, ``Toward enhanced reinforcement learning-based resource management via digital twin: Opportunities, applications, and challenges,'' \emph{{IEEE} Netw.}, early access Aug. 2024.

\bibitem{10438215}
T.~Bozkus and U.~Mitra, ``Leveraging digital cousins for ensemble {Q}-learning in large-scale wireless networks,'' \emph{{IEEE} Trans. Signal Process.}, vol.~72, pp. 1114--1129, Feb. 2024.

\bibitem{10415196}
Z.~Chen, W.~Yi, A.~Nallanathan, and J.~A. Chambers, ``Distributed digital twin migration in multi-tier computing systems,'' \emph{{IEEE} J. Sel. Topics Signal Process.}, vol.~18, no.~1, pp. 109--123, Jan. 2024.

\bibitem{10363344}
X.~Huang, W.~Wu, S.~Hu, M.~Li, C.~Zhou, and X.~Shen, ``Digital twin based user-centric resource management for multicast short video streaming,'' \emph{{IEEE} J. Sel. Topics Signal Process.}, vol.~18, no.~1, pp. 50--65, Jan. 2024.

\bibitem{10078846}
T.~Shui, J.~Hu, K.~Yang, H.~Kang, H.~Rui, and B.~Wang, ``Cell-free networking for integrated data and energy transfer: Digital twin based double parameterized {DQN} for energy sustainability,'' \emph{{IEEE} Trans. Wireless Commun.}, vol.~22, no.~11, pp. 8035--8049, Nov. 2023.

\bibitem{10486201}
J.~Yu, A.~Y. Alhilal, T.~Zhou, P.~Hui, and D.~H.~K. Tsang, ``Attention-based {QoE}-aware digital twin empowered edge computing for immersive virtual reality,'' \emph{{IEEE} Trans. Wireless Commun.}, vol.~23, no.~9, pp. 11\,276--11\,290, Mar. 2024.

\bibitem{10345669}
A.~Paul, K.~Singh, M.-H.~T. Nguyen, C.~Pan, and C.-P. Li, ``Digital twin-assisted space-air-ground integrated networks for vehicular edge computing,'' \emph{{IEEE} J. Sel. Topics Signal Process.}, vol.~18, no.~1, pp. 66--82, Dec. 2024.

\bibitem{9904958}
X.~Wang \emph{et~al.}, ``Deep reinforcement learning: A survey,'' \emph{{IEEE} Trans. Neural Netw. Learn. Syst.}, vol.~35, no.~4, pp. 5064--5078, Apr. 2024.

\bibitem{tao2024parallel}
Z.~Tao, W.~Xu, and X.~You, ``Parallel digital twin-driven deep reinforcement learning for user association and load balancing in dynamic wireless networks,'' \emph{arXiv preprint arXiv:2410.07611}, 2024.

\bibitem{tao2023DTAC}
Z.~Tao, W.~Xu, and X.~You, ``Digital twin-accelerated online deep reinforcement learning for admission control in sliced communication networks,'' \emph{{IEEE} Trans. Commun.}, early access Oct. 2024.

\bibitem{10234388}
Q.~Guo, F.~Tang, and N.~Kato, ``Resource allocation for aerial assisted digital twin edge mobile network,'' \emph{{IEEE} J. Sel. Areas Commun.}, vol.~41, no.~10, pp. 3070--3079, Aug. 2023.

\bibitem{pmlr-v97-cobbe19a}
K.~Cobbe, O.~Klimov, C.~Hesse, T.~Kim, and J.~Schulman, ``Quantifying generalization in reinforcement learning,'' in \emph{Proc. Int. Conf. Mach. Learn. (ICML)}, vol.~97, Long Beach, CA, Jun. 2019, pp. 1282--1289.

\bibitem{chen2024review}
F.~Chen, S.~Li, J.~Han, F.~Ren, and Z.~Yang, ``Review of lightweight deep convolutional neural networks,'' \emph{Arch. Comput. Methods Eng.}, vol.~31, no.~4, pp. 1915--1937, Nov. 2023.

\bibitem{9725256}
H.~Yang, J.~Zhao, K.-Y. Lam, Z.~Xiong, Q.~Wu, and L.~Xiao, ``Distributed deep reinforcement learning-based spectrum and power allocation for heterogeneous networks,'' \emph{{IEEE} Trans. Wireless Commun.}, vol.~21, no.~9, pp. 6935--6948, Sep. 2022.

\bibitem{BSM}
N.~Ferns, P.~Panangaden, and D.~Precup, ``Metrics for finite markov decision processes,'' in \emph{Proc. Conf. Uncertainty Artif. Intell. (UAI)}, Banff, Canada, July 2004, pp. 162--169.

\bibitem{5288526}
S.~J. Pan and Q.~Yang, ``A survey on transfer learning,'' \emph{{IEEE} Trans. Knowl. Data Eng.}, vol.~22, no.~10, pp. 1345--1359, Oct. 2010.

\bibitem{10636241}
X.~Wu, J.~H. Manton, U.~Aickelin, and J.~Zhu, ``On the generalization for transfer learning: An information-theoretic analysis,'' \emph{{IEEE} Trans. Inf. Theory}, vol.~70, no.~10, pp. 7089--7124, Aug. 2024.

\bibitem{taylor2009transfer}
M.~E. Taylor and P.~Stone, ``Transfer learning for reinforcement learning domains: A survey,'' \emph{J. Mach. Learn. Res.}, vol.~10, no.~7, 2009.

\bibitem{10172347}
Z.~Zhu, K.~Lin, A.~K. Jain, and J.~Zhou, ``Transfer learning in deep reinforcement learning: A survey,'' \emph{{IEEE} Trans. Pattern Anal. Mach. Intell.}, vol.~45, no.~11, pp. 13\,344--13\,362, July 2023.

\bibitem{Li2006TowardsAU}
\BIBentryALTinterwordspacing
L.~Li, T.~J. Walsh, and M.~L. Littman, ``Towards a unified theory of state abstraction for {MDPs},'' in \emph{AI\&M}, 2006. [Online]. Available: \url{https://api.semanticscholar.org/CorpusID:245037}
\BIBentrySTDinterwordspacing

\bibitem{LeLan2021}
C.~Le~Lan, M.~G.~Bellemare, and P.~S. Castro, ``Metrics and continuity in reinforcement learning,'' in \emph{Proc. AAAI Conf. Artif. Intell. (AAAI)}, vol.~35, no.~9, Virtal, May 2021, pp. 8261--8269.

\bibitem{NEURIPS2021_256bf8e6}
M.~Kemertas and T.~Aumentado-Armstrong, ``Towards robust bisimulation metric learning,'' in \emph{Proc. Conf. Neural Inf. Process. Syst. (NeurIPS)}, vol.~34, Virtal, Dec. 2021, pp. 4764--4777.

\bibitem{castro2010using}
P.~Castro and D.~Precup, ``Using bisimulation for policy transfer in {MDPs},'' in \emph{Proc. AAAI Conf. Artif. Intell. (AAAI)}, vol.~24, no.~1, Atlanta, GA, 2010, pp. 1065--1070.

\bibitem{phillips2006knowledge}
C.~Phillips, ``Knowledge transfer in markov decision processes,'' Technical report, McGill University, School of Computer Science, Tech. Rep., 2006.

\bibitem{silver2016mastering}
D.~Silver \emph{et~al.}, ``Mastering the game of {Go} with deep neural networks and tree search,'' \emph{Nature}, vol. 529, no. 7587, pp. 484--489, Jan. 2016.

\bibitem{berner2019dota}
C.~Berner \emph{et~al.}, ``Dota 2 with large scale deep reinforcement learning,'' \emph{arXiv preprint arXiv:1912.06680}, Dec. 2019.

\bibitem{10147312}
X.~Xu, R.~Li, Z.~Zhao, and H.~Zhang, ``The gradient convergence bound of federated multi-agent reinforcement learning with efficient communication,'' \emph{{IEEE} Trans. Wireless Commun.}, vol.~23, no.~1, pp. 507--528, Jan. 2024.

\bibitem{10623365}
R.~Safavinejad, H.-H. Chang, and L.~Liu, ``Deep reinforcement learning for dynamic spectrum access: Convergence analysis and system design,'' \emph{{IEEE} Trans. Wireless Commun.}, vol.~23, no.~12, pp. 18\,888--18\,902, Dec. 2024.

\bibitem{10287987}
S.~Zhang, S.~Bao, K.~Chi, K.~Yu, and S.~Mumtaz, ``Drl-based computation rate maximization for wireless powered multi-{AP} edge computing,'' \emph{{IEEE} Trans. Commun.}, vol.~72, no.~2, pp. 1105--1118, Feb. 2024.

\bibitem{10522623}
Z.~Zhang, Y.~Huang, C.~Zhang, Q.~Zheng, L.~Yang, and X.~You, ``Digital twin-enhanced deep reinforcement learning for resource management in networks slicing,'' \emph{IEEE Transactions on Communications}, vol.~72, no.~10, pp. 6209--6224, Oct. 2024.

\bibitem{10235311}
Y.~Zhang, J.~Hu, and G.~Min, ``Digital twin-driven intelligent task offloading for collaborative mobile edge computing,'' \emph{{IEEE} J. Sel. Areas Commun.}, vol.~41, no.~10, pp. 3034--3045, Oct. 2023.

\bibitem{kaelbling1996reinforcement}
L.~P. Kaelbling, M.~L. Littman, and A.~W. Moore, ``Reinforcement learning: A survey,'' \emph{J. Artif. Intell. Res.}, vol.~4, pp. 237--285, 1996.

\bibitem{kantorovich1960mathematical}
L.~V. Kantorovich, ``Mathematical methods of organizing and planning production,'' \emph{Management science}, vol.~6, no.~4, pp. 366--422, July 1960.

\bibitem{villani2021topics}
C.~Villani, \emph{Topics in Optimal Transportation}.\hskip 1em plus 0.5em minus 0.4em\relax American Mathematical Soc., 2021, vol.~58.

\bibitem{pmlr-v70-arjovsky17a}
M.~Arjovsky, S.~Chintala, and L.~Bottou, ``{W}asserstein generative adversarial networks,'' in \emph{Proc. Int. Conf. Mach. Learn. (ICML)}, ser. Proceedings of Machine Learning Research, vol.~70, Sydney, Australia, Aug. 2017, pp. 214--223.

\bibitem{banach1922operations}
S.~Banach, ``Sur les op{\'e}rations dans les ensembles abstraits et leur application aux {\'e}quations int{\'e}grales,'' \emph{Fundamenta mathematicae}, vol.~3, no.~1, pp. 133--181, 1922.

\bibitem{doi:10.1137/10080484X}
N.~Ferns, P.~Panangaden, and D.~Precup, ``Bisimulation metrics for continuous markov decision processes,'' \emph{SIAM J. Comput.}, vol.~40, no.~6, pp. 1662--1714, Dec. 2011.

\bibitem{puterman2014markov}
M.~L. Puterman, \emph{Markov Decision Processes: Discrete Stochastic Dynamic Programming}.\hskip 1em plus 0.5em minus 0.4em\relax John Wiley \& Sons, 2014.

\bibitem{tarski1955lattice}
A.~Tarski, ``A lattice-theoretical fixpoint theorem and its applications.'' \emph{Pacific J. Math.}, vol.~5, no.~2, pp. 285--309, June 1955.

\bibitem{villani2009optimal}
C.~Villani \emph{et~al.}, \emph{Optimal Transport: Old and New}.\hskip 1em plus 0.5em minus 0.4em\relax Springer, 2009, vol. 338.

\bibitem{DOBKIN19801}
D.~P. Dobkin and S.~P. Reiss, ``The complexity of linear programming,'' \emph{Theoretical Computer Science}, vol.~11, no.~1, pp. 1--18, May 1980.

\bibitem{hoeffding1994probability}
W.~Hoeffding, ``Probability inequalities for sums of bounded random variables,'' \emph{The collected works of Wassily Hoeffding}, pp. 409--426, 1994.

\bibitem{3GPP.21.915}
3GPP, ``{Digital cellular telecommunications system (Phase 2) (GSM); Universal Mobile Telecommunications System (UMTS); LTE; 5G; 3rd Generation Partnership Project (3GPP)},'' Technical Report 21.915, 2019.

\end{thebibliography}
\end{document}